\def\Secref#1{Section~\ref{#1}}
\def\eqref#1{Eq.~(\ref{#1})}
\def\1{\bm{1}}
\def\va{{\bm{a}}}
\def\vb{{\bm{b}}}
\def\vc{{\bm{c}}}
\def\vh{{\bm{h}}}
\def\vp{{\bm{p}}}
\def\vv{{\bm{v}}}
\def\vw{{\bm{w}}}
\def\vx{{\bm{x}}}
\def\vy{{\bm{y}}}
\def\vz{{\bm{z}}}
\DeclareMathAlphabet{\mathsfit}{\encodingdefault}{\sfdefault}{m}{sl}
\SetMathAlphabet{\mathsfit}{bold}{\encodingdefault}{\sfdefault}{bx}{n}
\def\gD{{\mathcal{D}}}
\def\gF{{\mathcal{F}}}
\def\gK{{\mathcal{K}}}
\def\gL{{\mathcal{L}}}
\def\gN{{\mathcal{N}}}
\def\gO{{\mathcal{O}}}
\def\gT{{\mathcal{T}}}
\def\gX{{\mathcal{X}}}
\newcommand{\E}{\mathbb{E}}
\newcommand{\R}{\mathbb{R}}
\newtheorem{theorem}{Theorem}[section]
\theoremstyle{definition}
\newtheorem{definition}{Definition}[section]
\theoremstyle{remark}
\newtheorem*{remark}{Remark}
\icmltitlerunning{Particle Flow Bayes' Rule}
\begin{document}

\twocolumn[
\icmltitle{Particle Flow Bayes' Rule}



\icmlsetsymbol{equal}{*}

\begin{icmlauthorlist}
\icmlauthor{Xinshi Chen}{equal,gt1}
\icmlauthor{Hanjun Dai}{equal,gt2}
\icmlauthor{Le Song}{gt2,ant}
\end{icmlauthorlist}

\icmlaffiliation{gt1}{School of Mathematics,}
\icmlaffiliation{gt2}{School of Computational Science and Engineering, Georgia Institute of Technology, Atlanta, Georgia, USA.}
\icmlaffiliation{ant}{Ant Financial, Hangzhou, China}

\icmlcorrespondingauthor{Xinshi Chen}{xinshi.chen@gatech.edu}

\icmlkeywords{ODE, Bayesian Inference, Particle Flow, Sequential Monte Carlo, ICML}

\vskip 0.3in
]



\printAffiliationsAndNotice{\icmlEqualContribution} 

\begin{abstract}
We present a particle flow realization of Bayes' rule, where an ODE-based neural operator is used to transport particles from a prior to its posterior after a new observation. We prove that such an ODE operator exists. Its neural parameterization can be trained in a meta-learning framework, allowing this operator to reason about the effect of an individual observation on the posterior, and thus generalize across different priors, observations and to sequential Bayesian inference. We demonstrated the generalization ability of our particle flow Bayes operator in several canonical and high dimensional examples. 
\end{abstract}

\vspace{-4mm}
\section{Introduction} \label{sec:intro}

\begin{figure*}[t!]
    \vspace{-2mm}
    \centering
    \includegraphics[width=0.8\textwidth]{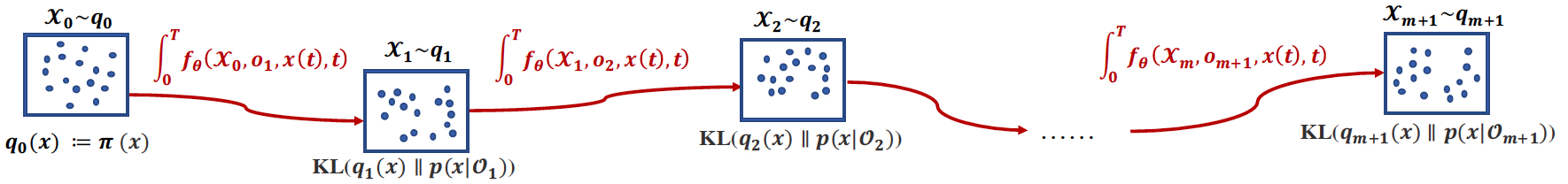}
    \vspace{-2.5mm}
    \caption{\small PFBR framework: sequential Bayesian inference as a deterministic flow of particles.}
    \label{fig:seq-framework}
    \vspace{-4mm}
\end{figure*}

\setlength{\abovedisplayskip}{4pt}
\setlength{\abovedisplayshortskip}{1pt}
\setlength{\belowdisplayskip}{4pt}
\setlength{\belowdisplayshortskip}{1pt}
\setlength{\jot}{3pt}
\setlength{\textfloatsep}{6pt}  

In many data analysis tasks, it is important to estimate unknown quantities $\vx \in \R^d$ from observations $\gO_m:=\{o_1,\cdots,o_m\}$. Given prior knowledge $\pi(\vx)$ and likelihood functions $p(o_t|\vx)$, the essence of Bayesian inference is to compute the posterior $p(\vx|\gO_m)\propto \pi(\vx)\prod_{t=1}^{m} p(o_t|\vx)$ by Bayes' rule. For many nontrivial models, the prior might not be conjugate to the likelihood, making the posterior not in a closed form. Therefore, computing the posterior often results in intractable integration and poses significant challenges. Typically, one resorts to approximate inference methods such as sampling (e.g., MCMC)~\citep{AndFreDouJor03} or variational inference~\citep{WaiJor03}.

In many real problems, observations arrive sequentially online, and Bayesian inference needs be performed recursively, 
\vspace{-2.5mm}
\begin{equation}\label{eq:recursive-Bayes}
     \overbrace{p(\vx|\gO_{m+1})}^\text{updated posterior}~\propto \overbrace{p(\vx|\gO_{m})}^\text{current posterior}\overbrace{p(o_{m+1}|\vx)}^\text{likelihood}. 
\end{equation}
That is the estimation of $p(\vx|\gO_{m+1})$ should be computed based on the estimation of $p(\vx|\gO_{m})$ obtained from the last iteration and the presence of the new observation $o_{m+1}$. 
It therefore requires algorithms which allow for efficient online inference. In this case, 
both standard MCMC and variational inference become inefficient, since the former requires a complete scan of the dataset in each iteration, and the latter requires solving an optimization for every new observation. Thus, sequential Monte Carlo (SMC)~\citep{DouFreGor01,BalMad06} or stochastic approximations, such as stochastic gradient Langevin dynamics~\citep{WelTeh11} and stochastic variational inference~\citep{HofBleWanPai12}, are developed to improve the efficiency. However, SMC suffers from the path degeneracy problems in high dimensions~\citep{DaumHuang03,SnyderBeng08}, and rejuvenation steps are designed but may violate the online sequential update  requirement~\citep{CanShiGri09,ChopinJacob13}. 
Stochastic approximation methods are prescribed algorithms that cannot exploit the structure of the problem for further improvement.

To address these challenges, the seminal work of Kernel Bayes Rule (KBR) views the Bayes update as an operator in reproducing kernel Hilbert spaces (RKHS) which can be learned and directly produce the posterior from prior after each observation~\citep{FukSonGre12}. In the KBR framework, the posterior is represented as an embedding $\mu_m:=\mathbb{E}_{p(\vx|\gO_m)}[\phi(\vx)]$ using a feature map $\phi(\cdot)$ associated with a kernel function; then the kernel Bayes operator $\gK(\cdot, o)$ will take this embedding as input and produce the embedding of the updated posterior, 
\begin{equation}
    \overbrace{\mu_{m+1}}^{\text{updated embedding}} =\quad \gK(\overbrace{\mu_m}^{\text{current embedding}},\quad o_{m+1}\quad). 
\end{equation}
Another novel aspect of KBR method is that it contains a training phase and a testing phase, where the structure of the problem at hand (e.g., the likelihood) is taken into account in the training phase, and in the testing phase, the learned operator $\gK$ will directly operate on the current posterior $\mu_m$ to produce the output. However, despite the nice concepts of KBR operator, it only works well for a limited range of problems due to its strong theoretical assumptions.

In this work, we aim to lift the limitation of KBR operator, and will design a novel continuous particle flow operator $\gF$ to realize the Bayes update, for which we call it {\bf particle flow Bayes' rule} (PFBR). In the PFBR framework (Fig.~\ref{fig:seq-framework}), a prior distribution $\pi(\vx)$, or, the current posterior $\pi_m(\vx):=p(\vx|\gO_m)$ is approximated by a set of $N$ {equally weighted} particles $\gX_m = \{\vx_m^1,\ldots,\vx_m^N\}$; 
and then, given an observation $o_{m+1}$, the flow operator $\gF(\gX_m, \vx_m^n, o_{m+1})$ will transport each particle $\vx_m^n$ to a new particle $\vx_{m+1}^n$ to approximate the new posterior $p(\vx|\gO_{m+1})$. That is,
\begin{align}
\overbrace{\vx_{m+1}^n}^{\text{updated particle}}
&=\quad \gF({\gX_m}, \overbrace{\vx_m^n}^{\text{current particle}},\quad o_{m+1}),
\end{align}
where $\gX_{m+1} = \{\vx_{m+1}^1, \ldots, \vx_{m+1}^N\}$ will be used as samples from the new posterior $p(\vx|\gO_{m+1})$. Furthermore, this PFBR operator $\gF$ can be applied recursively to $\gX_{m+1}$ and a new observation $o_{m+2}$ to produce $\gX_{m+2}$, and so on. 

In a high-level, we model the PFBR operator $\gF$ as a continuous deterministic flow, which propagates the locations of particles and the values of their probability density simultaneously through a dynamical system described by ordinary differential equations (ODEs). 
A natural question is whether a fixed ODE-based Bayes operator applicable to different prior distributions and likelihood functions exists. In our paper, we resolve this important theoretical question by making a novel connection between PFBR operator and the Fokker-Planck equation of Langevin dynamics. The proof of existence also provides a basis for our parameterization of PFBR operator using DeepSet~\citep{ZahKotRavPocetal17}.  

Similar to KBR, PFBR have a training phase and a testing phase. However, the training procedure is very different as it adopts a meta-learning framework~\citep{AndDenGomHofetal16}, where multiple related Bayesian inference tasks with different priors and observations are created. PFBR operator $\gF$ will learn from these tasks how to update the posteriors given new observations. During test phase, the learned PFBR will be applied directly to new observations without either re-optimization or storing previous observations. We conduct various experiments to show that the learned PFBR operator can generalize to new Bayesian inference tasks. 

{\bf Related work.} There is a recent surge of interests in ODE-based Bayesian inference~\citep{ChenRubanova18,ZhaEWan18,GraCheBetSutetal18,LeiSuYauGu17}. These works focus on fitting a single target distribution. Consequently, the learned flow can not generalize directly to a new dataset, a new prior or to sequential setting without re-optimization. 

\section{Bayesian Inference as Particle Flow}

We present details in this section from four aspects: (1) How to map sequential Bayes inference to particle flow? (2) What is the property of such particle flow? (3) Does a shared flow-based Bayesian operator $\gF$ exist? (4) How to parameterize the flow operator $\gF$? 

\vspace{-1mm}
\subsection{PFBR: Particle Flow Bayes' Rule}

The problem mapping from sequential Bayesian inference to particle flow goes as follows. Initially, $N$ particles $\gX_0=\{\vx_0^1,\ldots,\vx_0^N\}$ are sampled i.i.d. from a prior $ \pi(\vx)$. Given an observation $o_1$, the operator $\gF$ will transport the particles to $\gX_{1}=\{\vx_1^1,\ldots,\vx_1^N\}$ to estimate the posterior $p(\vx|\gO_1)\propto \pi(\vx)p(o_1|\vx)$. We define this transformation as the solution of an ODE. That is, $\forall n$,
\begin{equation*}
    \left\{\begin{tabular}{l}
        $\frac{d\vx}{dt} = f(\gX_0,o_{1},\vx(t),t),~\forall t\in(0,T]$ \cr
        $\vx(0) = \vx_0^n$
    \end{tabular}\right.
    \xRightarrow{\text{gives}}\vx_1^n =\vx(T).
\end{equation*}
The flow velocity $f$ takes observation $o_1$ as input and determines both direction and speed of the change of $\vx(t)$. In this ODE model, each particle $\vx_0^n$ sampled from the prior gives an initial value $\vx(0)$, and then the flow velocity $f$ will evolve the particle continuously and deterministically. At terminate time $T$, we will take solution $\vx(T)$ as the transformed particle $\vx_1^n$ for estimating the posterior. 

Applying this ODE-based transformation sequentially as new observations $o_2, o_3,\ldots$ arrive, we can define a recursive particle flow Bayes operator, called PFBR, as
\begin{align}
    \vx_{m+1}^n 
    & =\gF(\gX_m, o_{m+1}, \vx_m^n) \nonumber \\
    &:=\vx_m^n +\textstyle{ \int_{0}^{T}}f(\gX_m, o_{m+1}, \vx(t),t)\,dt. \label{eq:flow}
\end{align}
The set of obtained particles $\gX_{m+1}$ can be used to perform Bayesian inference such as estimating the mean and quantifying the uncertainty of any test function by averaging over these particles.

At this moment, we assume $f$ has a form of $f(\gX,o, \vx(t),t)$, and will be shared across different sequential stages. In section~\ref{sec:existence}, a rigorous discussion on the existence of a shared flow velocity of this form will be made. 
Next, we will discuss further properties of this continuous particle flow which will help us study the existence of such operator for Bayesian inference, and design the parameterization and the learning for the flow velocity.

\vspace{-1mm}
\subsection{Property of Continuous Deterministic  Flow}\label{sec:mass-transport}
\vspace{-0.5mm}

The continuous transformation of $\vx(t)$ described by ODE $d\vx/dt = f$ defines a \emph{deterministic} flow for each particle. Let $q(\vx, t)$ be the probability density of the continuous random variable $\vx(t)$. The change of this density is also determined by $f$. More specifically, $q$ follows the {\bf continuity equation}~\citep{Batchelor00}:
\begin{align}\label{eq:continuity}
    \partial q(\vx,t)/\partial t = -\nabla_x\cdot (qf),
\end{align}
where $\nabla_x\cdot$ is the divergence operator.
Continuity equation is the mathematical expression for the law of {\it local conservation of mass} - mass can neither be created nor destroyed, nor can it ''teleport'' from one place to another.

Given continuity equation, one can describe the change of log-density by another differential equation (Theorem~\ref{thm:change-of-log-density}). 

\begin{theorem}\label{thm:change-of-log-density} If $d\vx/dt =f$, then the change in log-density follows the differential equation~\citep{ChenRubanova18}
\begin{equation}
    d \log(q(\vx, t))/dt = -\nabla_x \cdot f.
\end{equation}
\end{theorem}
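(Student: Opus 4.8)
The plan is to derive the log-density evolution directly from the continuity equation \eqref{eq:continuity} by treating $\log q$ along a particle trajectory. First I would fix a particle path $\vx(t)$ solving $d\vx/dt = f(\gX,o,\vx(t),t)$ and consider the quantity $\log q(\vx(t),t)$ as a function of $t$ alone. Applying the chain rule, its total time derivative splits into an explicit time-partial plus a convective term:
\begin{equation*}
    \frac{d}{dt}\log q(\vx(t),t) = \frac{\partial \log q}{\partial t} + \nabla_x \log q \cdot \frac{d\vx}{dt} = \frac{1}{q}\frac{\partial q}{\partial t} + \frac{1}{q}\nabla_x q \cdot f.
\end{equation*}

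Next I would substitute the continuity equation $\partial q/\partial t = -\nabla_x\cdot(qf)$ into the first term. Expanding the divergence via the product rule, $\nabla_x\cdot(qf) = \nabla_x q \cdot f + q\,\nabla_x\cdot f$, so that $\frac{1}{q}\frac{\partial q}{\partial t} = -\frac{1}{q}\nabla_x q\cdot f - \nabla_x\cdot f$. Adding the convective term $\frac{1}{q}\nabla_x q\cdot f$ cancels the first piece exactly, leaving $d\log q(\vx(t),t)/dt = -\nabla_x\cdot f$, which is the claimed identity. This is essentially the infinitesimal change-of-variables formula for the density under the flow.

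I do not expect a serious obstacle here; the argument is a short computation. The only points requiring mild care are regularity assumptions — one needs $q$ to be $C^1$ in $(\vx,t)$ and strictly positive along the trajectory so that $\log q$ is differentiable and the division by $q$ is legitimate, and $f$ should be smooth enough (e.g.\ Lipschitz in $\vx$, continuous in $t$) for the trajectory and the continuity equation to make sense. Under these standard conditions the derivation above is rigorous. Since the result is attributed to \citet{ChenRubanova18}, I would present the above as a self-contained verification rather than reproving it from scratch, and note that the same manipulation underlies the instantaneous change-of-variables used for continuous normalizing flows.
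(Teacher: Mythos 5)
Your proof is correct and follows essentially the same route as the paper's: both expand the divergence in the continuity equation via the product rule and recognize that the convective term cancels against the material-derivative contribution, differing only in whether the logarithm is taken before or after this cancellation. The added remarks on regularity ($q$ positive and $C^1$, $f$ Lipschitz) are sensible but not part of the paper's argument.
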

\vspace{-1mm}

Since for any physical quantity $q(\vx,t)$, the distinguish between material derivative $d/dt$ and partial derivative $\partial/\partial t$ is important, we clarify the definition before the proof of this theorem.

\begin{definition} {\it Material derivative} of $q(\vx,t)$ is defined as
\begin{equation}
    dq/dt = \partial q/\partial t +  \nabla_x q\cdot d\vx/dt.
\end{equation}
Note that $dq/dt$ defines the rate of change of $q$ in a given particle as it moves along its trajectory $\vx=\vx(t)$ in the flow, while $\partial q/\partial t$ means the rate of change of $q$ at a particular point $\vx$ that is fixed in the space. 
\end{definition}
\begin{proof}[Proof of Theorem~\ref{thm:change-of-log-density}]
\vspace{-2mm}
By continuity equation,
$\frac{\partial q}{\partial t} =- \nabla_x q\cdot f -q\nabla_x\cdot f \Rightarrow \frac{d q}{dt} = - q \nabla_x\cdot f $. By chain rule, we have
$\frac{d\log q}{dt} = \frac{1}{q}\frac{dq}{dt} = \frac{1}{q}(-q \nabla_x\cdot f ) = -\nabla_x \cdot f$.
\end{proof}
\vspace{-2mm}
Theorem~\ref{thm:change-of-log-density} gives the same result as the Instantaneous Change of Variables Theorem stated by~\citet{ChenRubanova18}. However, our statement is more accurate using the notation of material and partial derivatives. Our proof is simpler and intuitively clearer using continuity equation. This also helps us to see the connection to other physics problems such as fluid dynamics and electromagnetism.

With theorem~\ref{thm:change-of-log-density}, we can compute the log-density of the particles by integrating across $(0,T]$ for each $n$:
\begin{align}
    \label{eq:density_evolve}
   \log q_{m+1}(x_{m+1}^n) = \log q_m(x_m^n) - \textstyle{\int_{0}^{T}}\nabla_x \cdot f\,dt. 
\end{align}

\vspace{-1mm}
\subsection{Existence of Flow-based Bayes' Rule}\label{sec:existence}
\vspace{-0.5mm}

Does a unified flow velocity $f$ exist for different Bayesian inference tasks involving different priors and observations? If it does, what is the form of this function? These questions
are non-trivial even for simple Gaussian case. 

For instance, let the prior $\pi(x) = \gN(0, \sigma_x)$ and the likelihood $p(o|x) = \gN(x, \sigma)$ both be one dimensional Gaussian distributions. Given an observation $o=0$, the posterior distribution is $\gN(0, (\sigma \cdot \sigma_x)/(\sigma+\sigma_x))$. It means the ODE $d\vx/dt = f$ needs to push a zero mean Gaussian distribution with covariance $\sigma_x$ to another zero mean Gaussian distribution with covariance $(\sigma \cdot \sigma_x)/(\sigma+\sigma_x)$ for any $\sigma_x$. It is not clear whether such a unified flow velocity function $f$ exists and what is the form for it. 

To resolve the existence issue, we will first establish a connection between the deterministic flow in~\Secref{sec:mass-transport} and the stochastic flow: Langevin dynamics. Then we will leverage the connection between closed-loop control and open-loop control to show the existence of a unified $f$.

\vspace{-1mm}
\subsubsection{Connection to Stochastic Flow}
\vspace{-1mm}

Langevin dynamics is a \emph{stochastic} process 
\begin{align}
    \label{eq:Langevin}
    d\vx(t) = &\nabla_x\log p (\vx|\gO_{m})p (o_{m+1}|\vx)\,dt +\sqrt{2}d\vw(t), 
\end{align}
where $d\vw(t)$ is a standard Brownian motion. Given a fixed initial location $\vx(0)$, multiple runs of the Langevin dynamics to time $t$ will result in multiple random locations of $\vx(t)$ due to the randomness of $\vw(t)$. This stochastic flow is very different in nature comparing to the deterministic flow in \Secref{sec:mass-transport}, where a fixed location $\vx(0)$ will always end up with the same location $\vx(t)$. 

Nonetheless, while Langevin dynamics is a stochastic flow of a continuous random variable $\vx(t)$, the probability density $q(\vx,t)$ of $\vx(t)$ follows a \emph{deterministic} evolution according to the associated {\bf Fokker-Planck equation}~\citep{JordanOtto98}
\begin{align}\label{eq:FK-PLK}
    {\partial q}/{\partial t}= &-\nabla_x\cdot\left(q\nabla_x\log p (\vx|\gO_{m})p (o_{m+1}|\vx)\right)\nonumber \\
    &+ \Delta_x q(\vx,t),
    \end{align}
where $\Delta_x = \nabla_x\cdot \nabla_x$ is the Laplace operator. 
Furthermore, if the so-called potential function $\Psi(\vx):= -\log p (\vx|\gO_{m})p (o_{m+1}|\vx )$ is smooth and $e^{-\Psi}\in L^1(\R^d)$, the Fokker-Planck equation has a unique stationary solution in the form of a Gibbs distribution~\citep{JordanOtto98},
\begin{equation}
    q(\vx, \infty) =e^{-\Psi}/Z = p (\vx|\gO_{m})p (o_{m+1}|\vx)/Z.
\end{equation}
Clearly, this stationary solution is the posterior distribution $p(\vx|\gO_{m+1})$.

Now we will rewrite the Fokker-Planck equation in~\eqref{eq:FK-PLK} into the form of the deterministic flow in~\eqref{eq:continuity} from \Secref{sec:mass-transport}, and hence identify the corresponding flow velocity. 

\begin{theorem}\label{thm:station-solution}
Assume the deterministic transformation of a continuous random variable $\vx(t)$ is $d\vx/dt=f$, where
\begin{align}
\label{eq:fkplk-flow}
f= \nabla_x\log p (\vx|\gO_{m})p (o_{m+1}|\vx )-\nabla_x \log q(\vx,t)
\end{align}
and $q(\vx,t)$ is the probability density of $\vx(t)$.
If the potential function $\Psi$ is smooth and  $e^{-\Psi}\in L^1(\R^d)$, then $q(\vx,t)$ converges to $p(\vx|\gO_{m+1})$ as $t\rightarrow \infty$.
\vspace{-2mm}
\end{theorem}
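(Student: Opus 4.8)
The plan is to substitute the velocity field $f$ of~\eqref{eq:fkplk-flow} into the continuity equation~\eqref{eq:continuity}, which governs the density of \emph{any} deterministic flow, and to verify that the resulting evolution equation for $q(\vx,t)$ is exactly the Fokker--Planck equation~\eqref{eq:FK-PLK}. Writing the drift as $\nabla_x\log p(\vx|\gO_m)p(o_{m+1}|\vx) = -\nabla_x\Psi(\vx)$, so that $f = -\nabla_x\Psi - \nabla_x\log q$, the continuity equation gives $\partial q/\partial t = -\nabla_x\cdot(qf) = \nabla_x\cdot(q\nabla_x\Psi) + \nabla_x\cdot(q\nabla_x\log q)$, and the elementary identity $q\,\nabla_x\log q = \nabla_x q$ turns the last term into $\Delta_x q$. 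Hence $q$ solves $\partial q/\partial t = \nabla_x\cdot(q\nabla_x\Psi) + \Delta_x q$, which is~\eqref{eq:FK-PLK}. The point worth emphasizing is that, although $f$ depends on $q$ (so the particle system is of McKean--Vlasov type), the \emph{induced} PDE for the marginal density $q$ is the \emph{linear} Fokker--Planck equation with the \emph{fixed} potential $\Psi$ determined by the prior $p(\vx|\gO_m)$ and the likelihood $p(o_{m+1}|\vx)$, so the whole analysis reduces to the classical equation.

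With this identification in hand, I would invoke the standard Fokker--Planck theory cited in the excerpt~\citep{JordanOtto98}: under the stated hypotheses ($\Psi$ smooth, $e^{-\Psi}\in L^1(\R^d)$) the equation has the unique stationary solution $e^{-\Psi}/Z = p(\vx|\gO_m)p(o_{m+1}|\vx)/Z$, which by the recursive form of Bayes' rule~\eqref{eq:recursive-Bayes} equals the updated posterior $p(\vx|\gO_{m+1})$. To upgrade "stationary solution" to "$q(\vx,t)$ converges to it", I would use the gradient-flow structure of~\eqref{eq:FK-PLK} in the Wasserstein geometry: the free energy $F(q):=\int q\,\Psi\,d\vx + \int q\log q\,d\vx = \KL\!\big(q\,\|\,p(\cdot|\gO_{m+1})\big) - \log Z$ is a Lyapunov functional, non-increasing along the flow with $dF/dt = -\int q\,\big\|\nabla_x\log\!\big(q/p(\cdot|\gO_{m+1})\big)\big\|^2 d\vx \le 0$ and vanishing only at $q = p(\cdot|\gO_{m+1})$; since $F$ is bounded below and $\KL(\cdot\,\|\,p(\cdot|\gO_{m+1}))$ is lower semicontinuous, this forces $q(\vx,t)\to p(\vx|\gO_{m+1})$ as $t\to\infty$ (with an exponential rate when $\Psi$ additionally admits a log-Sobolev inequality).

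The main obstacle is the last step: converting decay of the free energy into convergence of $q(\cdot,t)$ itself in a clean topology (e.g.\ $L^1$ or total variation). In full generality this requires either a functional inequality or a tightness/compactness argument on the trajectory $\{q(\cdot,t)\}_{t\ge 0}$, and for the level of generality of this theorem it is cleanest simply to cite the classical long-time convergence results for the Fokker--Planck semigroup under the given smoothness and integrability assumptions. A secondary technical matter is well-posedness: one should argue that the nonlinear flow~\eqref{eq:fkplk-flow} is defined for all $t>0$ and that its time-marginals genuinely satisfy the continuity equation, which is where smoothness of $\Psi$ and sufficient decay of $q$ enter, justifying the integrations by parts (with vanishing boundary terms at infinity) performed above.
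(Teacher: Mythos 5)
Your proposal is correct and follows essentially the same route as the paper's own proof: substitute the velocity field into the continuity equation~\eqref{eq:continuity}, use $q\,\nabla_x\log q = \nabla_x q$ to recognize the result as the Fokker--Planck equation~\eqref{eq:FK-PLK}, and then invoke the Jordan--Kinderlehrer--Otto theory for the unique Gibbs stationary solution $e^{-\Psi}/Z = p(\vx|\gO_{m+1})$. The one place you go beyond the paper is the final step: the paper simply asserts that uniqueness of the stationary distribution yields convergence of $q(\vx,t)$, whereas you correctly identify that this requires the additional free-energy/Lyapunov argument (monotone decay of $\KL(q\,\|\,p(\cdot|\gO_{m+1}))$ along the Wasserstein gradient flow) or a citation to the classical long-time convergence results, which is a genuine gap in the paper's proof that your write-up patches.
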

\begin{proof}
By continuity equation in~\eqref{eq:continuity}, the probability density $q(\vx, t)$ of $\vx(t)$ satisfies
$
{\partial q}/{\partial t} = -\nabla_x\cdot \big(q\big(-\nabla_x\Psi(\vx)-\nabla_x \log q(\vx,t)\big) \big)
$. It is easy to see this equation is the same as the Fokker-Planck equation in~\eqref{eq:FK-PLK}, by decomposing the Laplace as $\Delta_x q = \nabla_x \cdot (q\nabla_x \log q) $. Under the conditions for the potential function $\Psi$, since Fokker-Planck equation has a unique stationary distribution $q(\vx,\infty)$ equal to the posterior distribution $p(\vx|O_{m+1})$, the deterministic flow in~\eqref{eq:fkplk-flow} will also converge to $p(\vx|O_{m+1})$. 
\vspace{-3mm}
\end{proof}
The implication of Theorem~\ref{thm:station-solution} is that we can construct a deterministic flow of particles to obtain the posterior and hence establish the existence. However, the flow velocity in~\eqref{eq:fkplk-flow} depends on the intermediate density $q(\vx,t)$ which changes over time. This seemingly suggests that $f$ can not be expressed as a fixed function of $p(\vx|\gO_{m})$ and $p(o_{m+1}|\vx)$. In the next section, we show that this dependence on $q(\vx,t)$ can be removed using theory of optimal control for deterministic systems.  

\subsubsection{Closed-loop to Open-loop Conversion}

Now the question is: whether the term  $\nabla_x \log q(\vx,t)$ in \eqref{eq:fkplk-flow} can be made independent of $q(\vx,t)$, or whether there is a equivalent form which can achieve the same flow. To investigate this question, we consider the the following \emph{deterministic} optimal control problem
\begin{align}\label{eq:opt-control1}
    \min_w~~&d(q(\vx,\infty),p(\vx|\gO_{m+1}) )\\
    \text{s.t.}~~&\frac{d\vx}{d t}=\nabla_x \log p(\vx|\gO_m)p(o_{m+1}|\vx)- w,\label{eq:opt-control2}
\end{align}
where $d$ can be any metric defined on the set of densities over $\R^d$. 
In~\eqref{eq:opt-control1}, we are optimizing over $w$, which is usually called the {\it control}. By Theorem~\ref{thm:station-solution}, $w=\nabla_x \log q(\vx,t)$ is apparently an optimal solution. Furthermore, the corresponding flow velocity derived by Fokker-Planck equation can be regarded as the continuous steepest descent of KL$(q(\vx,t)||p(\vx|\gO_{m+1}))$ under Wasserstein distance~\citep{JordanOtto98}. We are seeking an alternative expression to the above optimal solution which only depends on $p(\vx,\gO_m)$ and $p(o_{m+1}|\vx)$. First, we introduce the terminology below from optimal control literature. 

\begin{definition}
In optimal control literature, a control in a feed-back form $w=w(q(\vx,t),t)$ is called {\bf closed-loop}. In contrast, another type of control $w=w(q(\vx,0), t)$ is called {\bf open-loop}. An open-loop control is determined when the initial state $q(\vx,0)$ is observed, whereas, a closed-loop control can adapt to the encountered states $q(\vx,t)$.
\end{definition}

\begin{restatable}{theorem}{thmexist}
\label{th:open_loop}
For the optimal control problem in~\eqref{eq:opt-control1} and~\eqref{eq:opt-control2}, there exists an open-loop control $w^* = w^*(q(\vx,0),t)$ such that the induced state $q^*(\vx,t)$ satisfies $q^*(\vx,\infty)=p(\vx|\gO_{m+1})$. Moreover, $w^*$ has a fixed expression with respect to  $p(\vx|\gO_m)$ and $p(o_{m+1}|\vx)$ across different $m$.
\end{restatable}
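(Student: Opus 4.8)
The plan is to invoke the classical principle of deterministic optimal control that feedback confers no advantage: once the initial state of a deterministic system is fixed, its entire trajectory is predetermined, so any closed-loop control can be ``unrolled'' into an open-loop control reproducing the identical trajectory and cost. Here the role of the state is played by the density $q(\vx,t)$, whose evolution is governed by the continuity equation~\eqref{eq:continuity}, and the closed-loop control to unroll is the one supplied by Theorem~\ref{thm:station-solution}.

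First I would pin down the reference trajectory. Under the closed-loop control $w=\nabla_x\log q(\vx,t)$, the density $q$ solves the Fokker--Planck equation~\eqref{eq:FK-PLK}, a \emph{linear} parabolic PDE in $q$ whose coefficients are fixed by the potential $\Psi(\vx)=-\log p(\vx|\gO_m)p(o_{m+1}|\vx)$. Under the stated hypotheses on $\Psi$, standard parabolic theory yields, for each admissible initial density $q(\vx,0)=q_0$, a unique solution $q(\vx,t)$, which by the maximum principle and parabolic regularity is smooth and strictly positive for $t>0$; hence $\nabla_x\log q(\vx,t)$ is a well-defined smooth vector field. Write $q(\cdot,t)=\Phi_t[q_0;\Psi]$ for this solution operator. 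It depends only on $q_0$ and $\Psi$, and since~\eqref{eq:FK-PLK} has the same form at every stage, $\Phi$ is one and the same functional for all $m$.

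Next I would define the candidate $w^*(\vx,t):=\nabla_x\log\big(\Phi_t[q(\vx,0);\Psi]\big)(\vx)$, which is open-loop in the sense of the definition above: as a function of $\vx$ and $t$ it is determined the moment $q(\vx,0)$ is known, through the fixed functional $\Phi$, with no reference to the intermediate densities. To verify it achieves the target, substitute $w^*$ into~\eqref{eq:opt-control2}; by the continuity equation the induced density $q^*$ satisfies $\partial_t q^*=-\nabla_x\!\cdot\!\big(q^*\nabla_x\log p(\vx|\gO_m)p(o_{m+1}|\vx)\big)+\nabla_x\!\cdot\!\big(q^*\nabla_x\log\Phi_t[q_0;\Psi]\big)$. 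Trying $q^*=\Phi_t[q_0;\Psi]$ in the last term gives $q\nabla_x\log q=\nabla_x q$, so that term collapses to $\Delta_x q$ and the equation becomes exactly~\eqref{eq:FK-PLK}, which $\Phi_t[q_0;\Psi]$ solves by construction with matching initial data; by uniqueness for this linear equation $q^*\equiv\Phi_t[q_0;\Psi]$, whence $q^*(\vx,\infty)=q(\vx,\infty)=p(\vx|\gO_{m+1})$ by Theorem~\ref{thm:station-solution}. Since $d$ is a metric this also makes $w^*$ an optimal control, and because the dependence of $w^*$ on $p(\vx|\gO_m)$ and $p(o_{m+1}|\vx)$ enters only through the universal form of~\eqref{eq:FK-PLK}, $w^*$ has a single fixed expression across all $m$.

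The step I expect to be the main obstacle is the parabolic well-posedness bookkeeping underlying this argument: one must secure that the reference Fokker--Planck solution is regular and strictly positive enough for $\nabla_x\log q$ to be an admissible control field, and that the linear, time-inhomogeneous transport-plus-diffusion equation for $q^*$ has a unique solution so that the unrolling is legitimate. These follow from standard parabolic regularity given smoothness of $\Psi$, but they are the technical heart; the rest is the soft ``deterministic systems need no feedback'' principle.
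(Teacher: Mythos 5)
Your proposal is correct and follows essentially the same route as the paper's proof: it unrolls the closed-loop control $\nabla_x\log q(\vx,t)$ from Theorem~\ref{thm:station-solution} into an open-loop control by solving the Fokker--Planck initial value problem (the paper's ``Solve-IVP'' construction in Appendix~\ref{apx:existence}) and observing that the resulting expression depends only on $q(\vx,0)$, $p(\vx|\gO_m)$, $p(o_{m+1}|\vx)$ and $t$ in a form uniform over $m$. Your explicit back-substitution check and the attention to parabolic well-posedness, positivity, and uniqueness are welcome refinements of details the paper leaves implicit, but they do not change the argument.
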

\vspace{-4mm}
\begin{proof}
By Theorem~\ref{thm:station-solution}, $\tilde{w}^*(q(\vx,t),t) := \nabla_x \log q(\vx,t)$ can induce the optimal state $\tilde{q}^*(\vx,\infty)=p(\vx|\gO_{m+1})$ and achieve a zero loss, $d = 0$. Hence, $\tilde{w}^*$ is an optimal closed-loop control for this problem.

Although in general closed-loop control has a stronger characterization to the solution, in a deterministic system like~\eqref{eq:opt-control2}, the optimal closed-loop control and the optimal open-loop control will give the same control law and achieve the same optimal loss~\citep{Dreyfus64}. Hence, there exists an optimal open-loop control $w^
*=w^*(q(\vx,0),t)$ such that the induced state also gives a zero loss and thus $q^*(\vx,\infty)=p(\vx|\gO_{m+1})$. More details are provided in Appendix~\ref{apx:existence} to express $w^*$ as a fixed function of $q(\vx,0)$, $p(\vx|\gO_m)$, $p(o_{m+1}|\vx)$ and $t$.
\end{proof}
\vspace{-2mm}

{\bf Conclusion of a unified $f$.}
In sequential Bayesian inference, we will set $q(\vx,0)$ as $p(\vx|\gO_{m})$. Therefore, Theorem~\ref{th:open_loop} shows that there exists a fixed and deterministic flow velocity $f$ of the form 
\begin{align}\label{eq:shared-f}
  \nabla_x\log p (\vx|\gO_{m})p (o_{m+1}|\vx)
  -w^*(p (\vx|\gO_{m}),t),
\end{align}
which can transform $p(\vx|\gO_m)$ to $p(\vx|\gO_{m+1})$ and in turns define a unified particle flow Bayes operator $\gF$.

\subsection{Parametrization}
 We design a practical parameterization of $f$ based on the expression of the unified flow $f$ in~\eqref{eq:shared-f}.

{(i) $p(\vx|\gO_m)\Rightarrow \gX_m$:}
Since we do not have full access to the density $p(\vx|\gO_{m})$ but have samples $\gX_m = \{\vx_m^1,\ldots,\vx_m^{N}\}$ from it, we can use these samples as surrogates for $p(\vx|\gO_{m})$. A related example is feature space embedding of distributions~\citep{SmoGreSonSch07}, 
$\mu_{\gX}(p):= \textstyle{\int_{\gX}}\phi(\vx)p(\vx)\,d\vx \approx \textstyle{\frac{1}{N}\sum_{n=1}^N}\phi(\vx^n),~\vx^n\sim p$. 
Ideally, if $\mu_{\gX}$ is an injective mapping from the space of probability measures over $\gX$ to the feature space, the resulting embedding can be treated as a sufficient statistic of the density and any information we need from $p(\vx|\gO_m)$ can be preserved. Hence, we represent $p(\vx|\gO_m)$ by $\frac{1}{N}\sum_{n=1}^N \phi(\vx_m^n)$, where $\phi(\cdot)$ is a nonlinear feature mapping to be learned. Since we use a neural version of $\phi(\cdot)$, 
this representation can also be regarded as a DeepSet~\citep{ZahKotRavPocetal17}.

{(ii) $p(o_{m+1}|\vx)\Rightarrow (o_{m+1}, \vx(t))$:}
In both Langevin dynamics and~\eqref{eq:shared-f}, the only term containing the likelihood is $\nabla_x \log p(o_{m+1}|\vx)$. Consequently, we can use this term as an input to $f$. In the case when the likelihood function is fixed, we can also simply use the observation $o_{m+1}$, which results in similar performance in our experiments.

 Overall we parameterize the flow velocity as 
\begin{align}\label{eq:interacting}
    f = \vh\left( \textstyle{\frac{1}{N}\sum_{n=1}^N} \phi(\vx_m^n),o_{m+1},\vx(t),t \right ),
\end{align}
where $\vh$ and $\phi$ are neural networks (See specific architecture we use in Appendix~\ref{apx:parameterize}). Let  $\theta\in\Theta$ be their parameters which are independent of $t$. From now on, we write $f = f_{\theta}(\gX_m, o_{m+1}, \vx(t),t)$. In the next section, we will propose a meta learning framework for learning these parameters. 

\section{Learning Algorithm}

Since we aim to learn a generalizable Bayesian operator, we need to create multiple inference tasks as the training set and design the corresponding learning algorithm.

{\bf Multi-task Framework.}
The training set $\gD_{train}$ contains multiple inference tasks.
Each task is a tuple
\begin{align*}
   \gT := \left(\pi(\vx), p(\cdot | \vx), \gO_M:=\{o_1,\ldots,o_M\}\right) \in \gD_{train}
\end{align*}
which consists of a prior distribution, a likelihood function and a sequence of $M$ observations. A task with $M$ sequential observations can also be interpreted as a sequence of $M$ sub-tasks with 1 observation:
 \begin{align*}
    \tau := (p(\vx|\gO_m), p(\cdot|\vx), o_{m+1}) \in (\pi(\vx), p(\cdot | \vx), \gO_M).
 \end{align*}
Therefore, each task is a sequential Bayesian inference and each sub-task corresponds to one step Bayesian update.

{\bf Cumulative Loss Function.}
For each sub-task we define a loss $\text{KL}(q_m(\vx)||p(\vx|\gO_{m+1}))$, where $q_m(\vx)$ is the distribution transported by $\gF$ at $m$-th stage and $p(\vx|\gO_{m+1})$ is the target posterior (see Fig.~\ref{fig:seq-framework} for illustration). Meanwhile, the loss for the corresponding sequential task will be $
 \textstyle{\sum_{m=1}^M}\text{KL}(q_m(\vx)||p(\vx|\gO_m) )
$, which sums up the losses of all intermediate stages. Since its optimality is independent of normalizing constants, it is equivalent to minimize the negative evidence lower bound (ELBO)
 \begin{align}\label{eq:elbo}
  \gL(\gT) = {\sum_{m=1}^M} \sum_{n=1}^N \left(\log q_m(\vx_m^n)-\log p(\vx_m^n,\gO_m)\right).
 \end{align}
 \vspace{-3mm}

 The above expression is an empirical estimation using particles $\vx_m^n$. In each iteration during training phase, we will randomly sample a task from $\gD_{train}$ and update the PFBR operator $\gF$ by the loss gradient.

\subsection{Training Tasks Creation}\label{sec:task-creation}

Similar to some meta learning problems, the distribution of training tasks will essentially affect learner's performance on testing tasks~\citep{DaiKhaZhaDiletal17}. Depending on the nature of the Bayesian problem, we propose two approaches to construct the multitask training set: one is data driven, and the other is based on generative models. The general principle is that the collection of training priors have to be diverse enough or representative of those may be seen in testing time.

{\bf Data-Driven Approach.}
We can use the posterior distribution obtained from last Bayesian inference step as the new prior distribution. If the posterior distribution has an analytical (but unnormalized) expression, we will directly use this expression. 

More precisely, since each Bayesian inference step will generate a set of particles, $\gX_m = \{\vx_m^1,\ldots, \vx_m^N\}$, corresponding to the posterior distribution $p(\vx|\gO_m)$, we can apply a kernel density estimator on top of these samples to obtain an empirical density function \begin{align}\label{eq:kde_prior}
    \hat{\pi}(\vx; \gX_m) =\textstyle{ \frac{1}{N} \sum_{n=1}^N \frac{1}{\sqrt{2\pi}\sigma^d} }
    e^{ - \frac{\| \vx - \vx_m^n \|^2}{2\sigma^2} 
        }, 
\end{align}
where $\sigma$ is the kernel bandwidth. Then this density function with a set of samples from it will be used as the new prior for the next training task. This approach has two attractive properties. First, it does not require any prior knowledge of the model and thus is generally applicable to most problems. Second, it provides us a way of breaking a long sequence $(\pi(\vx), p(\cdot | \vx), \gO_M)$ with large $M$ into multiple tasks with shorter sequences
\begin{align}
\label{eq:seq_break} 
(\pi(\vx), p(\cdot|\vx), o_{1:m}) \cup (\hat{\pi}(\vx;\gX_m), p(\cdot|\vx). o_{m+1:M}),
\end{align}
This will help make the training procedure more efficient. This approach will be particularly suited to sequential Bayesian inference and is used in later experiments.

{\bf Generative Model Approach.}
Another possibility is to sample priors from a flexible generative model, such as a Dirichlet process Gaussian mixture model~\citep{Antoniak74}. We will leave the experimental evaluation of this approach for future investigation.

\subsection{Overall Learning Algorithm}

Learning the PFBR operator $\gF$ is learning the parameters $\theta$ of $f_\theta$ to minimize the following loss for multiple tasks 
\begin{align*}
    \gL(\gD_{train}) = \frac{1}{|\gD_{train} |} \sum_{\gT \in \gD_{train}} \gL(\gT), 
\end{align*}
where $\gL(\gT)$ is the loss for a single task defined in~\eqref{eq:elbo}. 
Both the particle location $\vx_m^n$ and its density value $q_m(\vx_m^n)$ in $\gL(\gT)$ are results of forward propagation determined by $f_{\theta}$ according to ODEs in~\eqref{eq:flow} and~\eqref{eq:density_evolve}. The training procedure is similar to other deep learning optimizations, except that an ODE technique called adjoint method~\citep{ChenRubanova18} is used to compute the gradient $d \gL/d\theta$ with very low memory cost. The overall algorithm under a meta-learning framework is summarized in Algorithm~\ref{algo:meta}.

\begin{algorithm}[t!]
  \DontPrintSemicolon
  \SetKwFunction{Grad}{Grad}
  \SetKwProg{Fn}{Function}{:}{}
  \SetKwFor{uFor}{For}{do}{}
  \SetKwFor{ForPar}{For all}{do in parallel}{}
  \SetKwComment{Comment}{$\triangleright$\ }{}
  \SetCommentSty{mycommfont}

  $\theta \gets$ random initialization \;
  
  \uFor{$itr=1$ to \#iterations}{
  $\gT = (\pi(\vx), p(\cdot|\vx), \gO_M) \gets$~sampled from $\gD_{train}$ \;
  
  $\gX_0 = \{\vx_0^n\}_{n=1}^N \overset{i.i.d.}{\sim} \pi(\vx)$\; \Comment*[r]{initial particles} 
  $\theta \gets \theta - \eta\, $\Grad{$\theta,\gX_0,\pi(\vx),p(\cdot|\vx),\gO_M$}\; 
  
  \uIf{{\rm mod}$(itr, k)=0$}{
  Perform a validation step on $\gD_{vali}$\; \Comment*[r]{validation}
  }
  }
\KwRet best $\theta^*$ in validation steps \;

\begin{remark}
See Appendix~\ref{apx:adjoint-method} for both derivation and algorithm steps of \Grad{}.
\end{remark}

\caption{Overall Learning Algorithm}\label{algo:meta}

\end{algorithm}

\subsection{Efficient Learning Tricks}\label{sec:efficient-training}

We introduce two techniques to improve training efficiency for large scale problems. Its application to an experiment which contains millions of data points is demonstrated in Section~\ref{sec:experiments}. These two techniques can be summarized as {\bf mini-batch embedding} and {\bf sequence-segmentation}. 

The loss function $\gL(\gT)$ in~\eqref{eq:elbo} contains a summation from $m=1$ to $M$ and also a hidden inner-loop summation 
\begin{align}\label{eq:joint-density}
\hspace{-3mm}
    \log p(\vx_m^n, \gO_m) =\log \pi(\vx_m^n)+\textstyle{\sum_{t=1}^m} \log p(o_t|\vx_m^n). 
\end{align}
 Thus the evaluation cost of $\gL(\gT)$ is quadratic with respect to the length $M$ of the observation sequences. Therefore, we need to reduce the length for large scale problems.

{\bf Mini-batch embedding.} Previously we defined $o_{m}$ as a single observation. However, we can also view it as a batch of $L$ observations,~i.e., $o_m=\{o_m^1,\ldots, o_m^L \}$. Each Bayesian update corresponding to this mini-batch will become $p(\vx|\gO_{m+1})\propto p(\vx|\gO_m)\prod_{l=1}^L p(o_{m}^l|\vx)$. If we rewrite $p(o_m|\vx)=\prod_{l=1}^L p(o_{m}^l|\vx)$, this is essentially the same as our previous expression. Therefore, we can replace $o_m$ by $\{o_m^l\}_{l=1}^L$ and input these samples simultaneously as a set to the flow $f_\theta$. To reduce the input dimension, we resort to a set-embedding $\frac{1}{L}\sum_{l=1}^L g(o_m^l)$, where $g$ is a neuralized nonlinear function to be learned. Depending on the structure of the model, one define the embedding as a Deepset~\citep{ZahKotRavPocetal17}, or, if the posterior is not invariant with respect to the order of the observations (e.g., hidden Markov models), one need to use a set-embedding that is not order invariant. To conclude, for a mini-batch Bayesian update, the parameterization of flow velocity can be modified as 
\begin{align*}
    f=\vh\left( \textstyle{\frac{1}{N}\sum_{n=1}^N} \phi(\vx_m^n),\textstyle{\frac{1}{L}\sum_{l=1}^L} g(o_{m+1}^l),\vx(t),t \right ).
\end{align*}
{\bf Sequence-segmentation.} We will use the approach in \eqref{eq:seq_break} to break a long sequence into short ones. More precisely, suppose we have particles $\{\vx_{m^*}^n\}_{n=1}^N$ at $m^*$-th stage. We can cut the sequence at position $m^*$ and generate a new task using the second half. The prior for the new sequence will be an empirical density estimation $\hat{\pi}(\vx;\gX_{m^*})$ as defined in~\eqref{eq:kde_prior}. Then, for all stages $m > m^*$, the terms in~\eqref{eq:joint-density} becomes
\vspace{-1mm}
\begin{align*}
  \log p(\vx, \gO_m) \approx \log \hat{\pi}(\vx_m^n;\gX_{m^*}) + {\sum_{t={m^*+1}}^m}\log p(o_t|\vx_m^n).
\end{align*}
We can apply this technique for multiple times to split a long sequence into multiple segments.
\begin{figure*}[ht!]
    \vspace{-1mm}
    \centering
    \begin{tabular}{@{}c@{}c@{}}
         \includegraphics[width=0.49\linewidth]{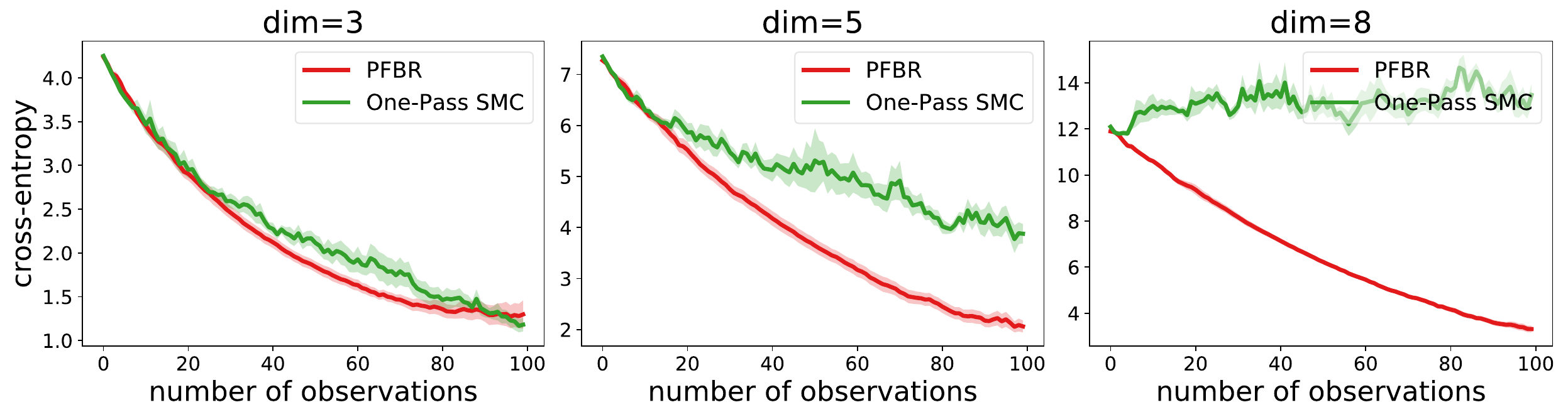} 
         &
          \includegraphics[width=0.51\linewidth]{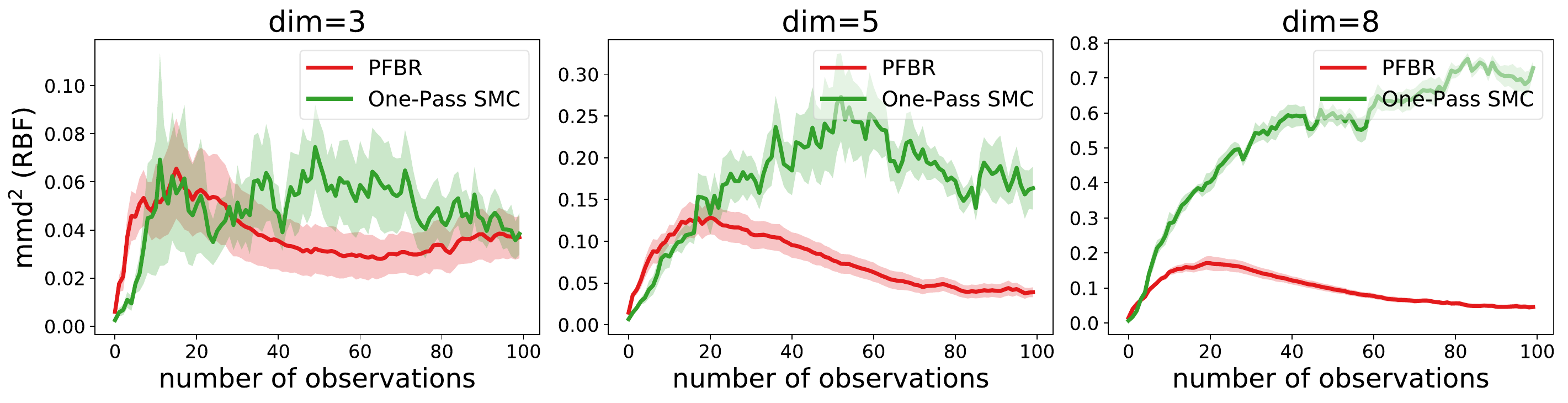}\\[-1mm]
          {\small (a) Cross entropy $\E_{p(\vx|\gO_m)} -\log q_m$ for dimension 3, 5 and 8} & {\small (b) Squared MMD with RBF kernel for dimension 3, 5 and 8}
    \end{tabular}
    \vspace{-3mm}
    \caption{\small Experimental results for Gaussian model. We use 256 obtained particles as samples from $p(\vx|\gO_m)$ and compare it with true posteriors. Each evaluation result is computed over 25 tasks and the shaded area shows the standard error. More results in Appendix~\ref{apx:experiments-results}.}
    \label{fig:mvn-exp}
    \vspace{-3mm}
\end{figure*}

\section{Experiments}
\label{sec:experiments}
\vspace{-0.5mm}

We conduct experiments on multivariate Gaussian model, hidden Markov model and Bayesian logistic regression to demonstrate the generalization ability of PFBR and also its accuracy for posterior estimation.

{\bf Evaluation metric.} For multivariate Gaussian model and Gaussian linear dynamical system, we could calculate the true posterior. Therefore, we can evaluate:
\vspace{-2mm}
\begin{itemize}[nosep, nolistsep, wide]
    \item[(i)] Cross-entropy $\E_{x\sim p}-\log q(x)$;
    \item[(ii)] Maximum mean discrepancy~\citep{GreBorRasSchetal12}
    \begin{align*}
      \text{MMD}^2= \E_{x,x'\sim p;~y,y'\sim q}[k(x,x')-2k(x,y)+k(y,y')];
    \end{align*}
    \item[(iii)] Integral evaluation discrepancy $\|\E_p h(x) - \E_q h(x)\|_2$;
\end{itemize}
where we use Monte Carlo method to compute $\E_{x\sim p}[\cdot]$ for the first two metrics. For the integral evaluation, we choose some test functions $h$ where the exact value of $\E_p h(x)$ has a closed-form expression. For the experiments on real-world dataset, we estimate the commonly used prediction accuracy due to the intractability of the posterior.

{\bf Multivariate Gaussian Model.}
The prior $\vx\sim \gN(\mu_{x},\Sigma_x)$, the observation conditioned on prior $o|\vx \sim \gN(\vx, \Sigma_o)$ and the posterior all follow Gaussian distributions. In our experiment, we use $\mu_x=\Vec{0}, \Sigma_x=I$ and $\Sigma_o=3I$. We test the learned PFBR on different sequences of 100 observations $\gO_{100}$, while the training set only contains sequences of 10 observations, which are ten times shorter than sequences in the testing set. However, since we construct a set of different prior distributions to train the operator, the diversity of the prior distributions allows the learned $\gF$ to generalize to compute posterior distributions in a longer sequence.

\vspace{-0.5mm}
We compare the performances with KBR~\citep{FukSonGre12} and one-pass SMC~\citep{BalMad06}. Both PFBR and KBR are learned from the training set and then used as algorithms on the testing set, which consists of 25 sequences of 100 observations $\{ \gO_{100}^j \sim \gN(\vx^j,\Sigma_o)\}_{j=1}^{25}$ conditioned on 25 different $\vx^j$ sampled from $\gN(\mathbf{0},I)$. 
We compare estimations of $p(\vx|\gO_m^j)$ across stages from $m=1$ to 100 and the results are plotted in Fig.~\ref{fig:mvn-exp}. Since KBR's performance reveals to be less comparable in this case, we leave its results to Appendix~\ref{apx:experiments-results}. We can see from Fig.~\ref{fig:mvn-exp} that as the dimension of the model increases, our PFBR has more advantages over one-pass SMC.

\begin{figure}
\vspace{-1mm}
\resizebox{1.0\columnwidth}{!}{%
\renewcommand{\arraystretch}{0} 
	\begin{tabular}{
	@{}c@{\hspace{0.1mm}}c@{\hspace{0.1mm}}c@{\hspace{0.1mm}}c@{\hspace{0.1mm}}c@{\hspace{0.1mm}}c@{\hspace{0.1mm}}
	c@{}
	}
\vspace{-1mm}
\rotatebox{90}{\tiny ~True~~}~~ &
\includegraphics[width=0.03\textwidth]{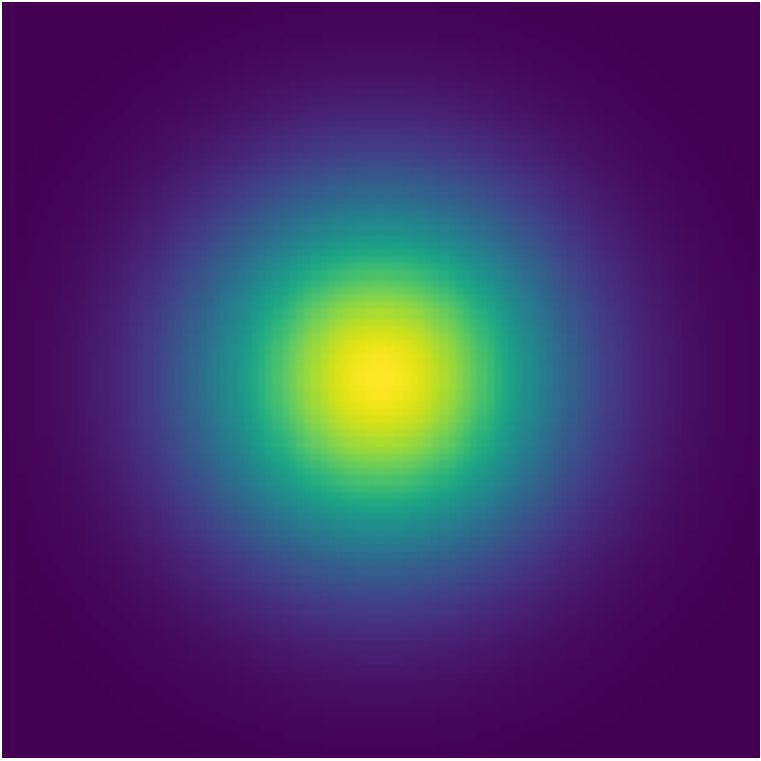} &
\includegraphics[width=0.03\textwidth]{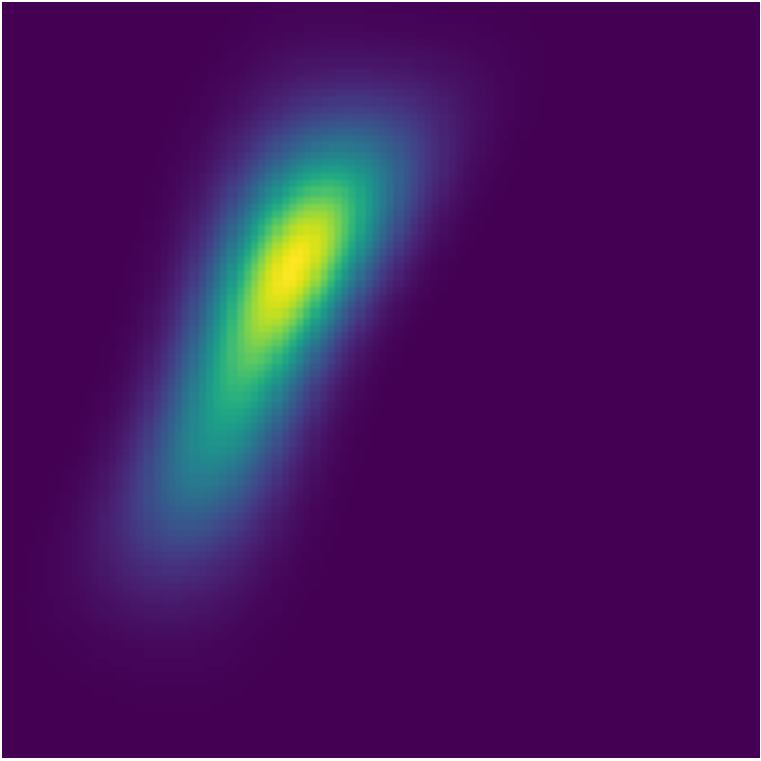} &
\includegraphics[width=0.03\textwidth]{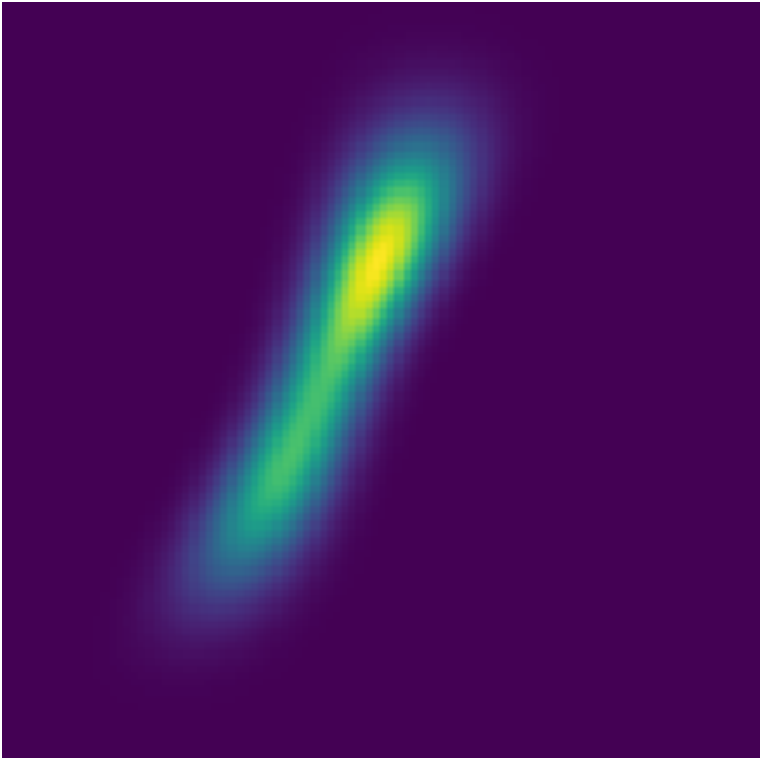} &
\includegraphics[width=0.03\textwidth]{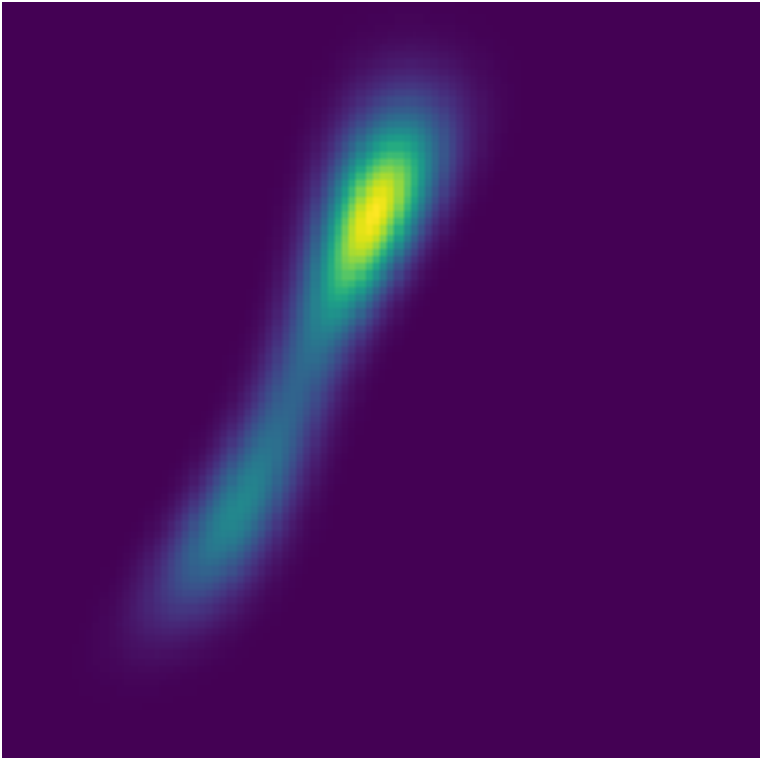} &
\includegraphics[width=0.03\textwidth]{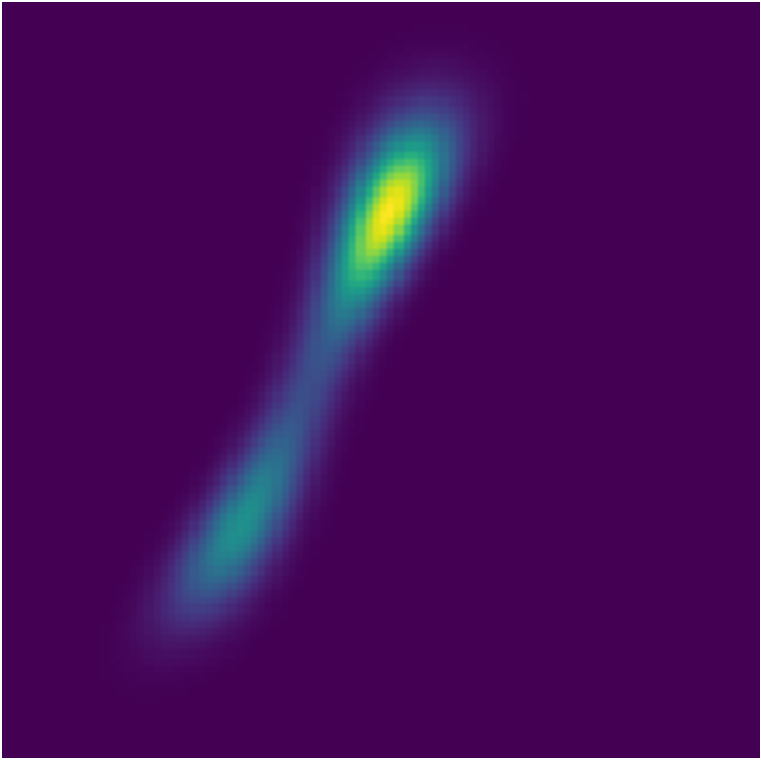} &
\includegraphics[width=0.03\textwidth]{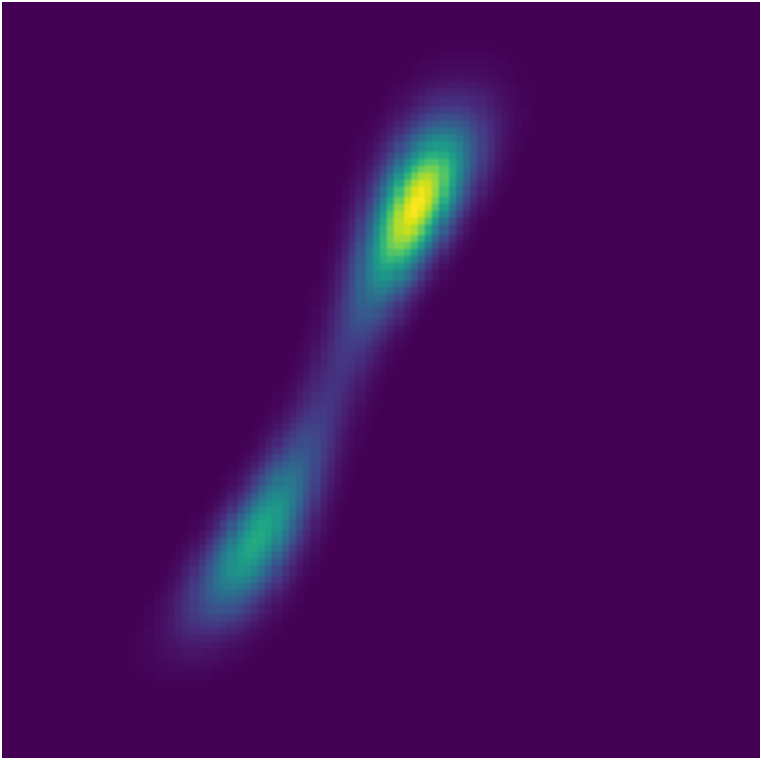} \\
\vspace{-1mm}
\rotatebox{90}{\tiny ~PFBR~~}~~ &
\includegraphics[width=0.03\textwidth]{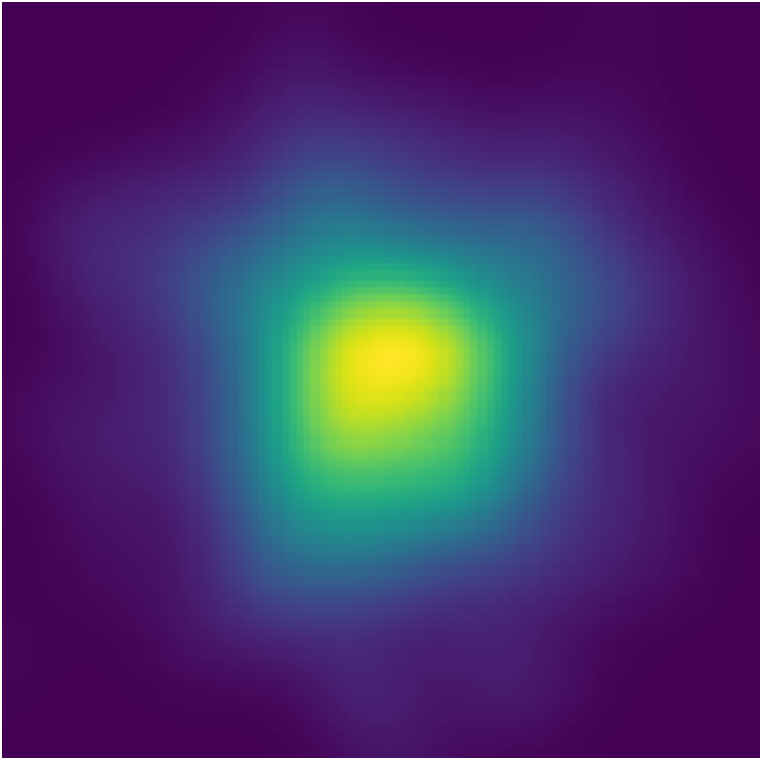} &
\includegraphics[width=0.03\textwidth]{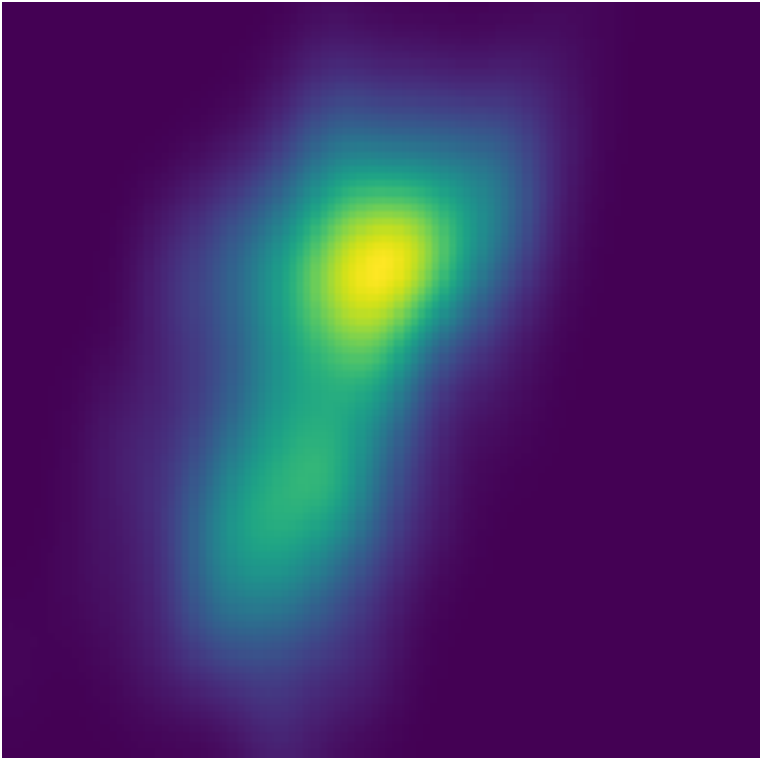} &
\includegraphics[width=0.03\textwidth]{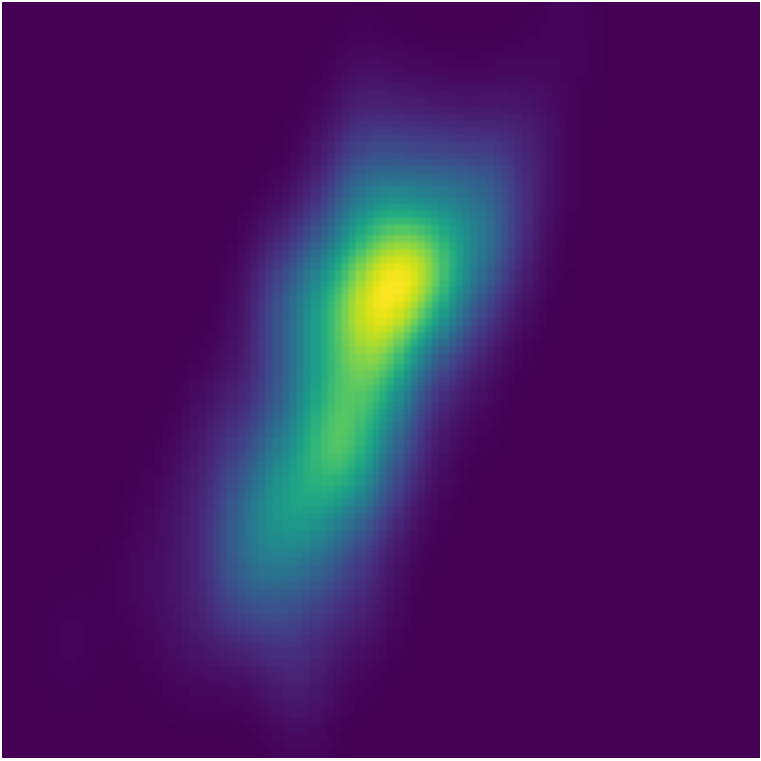} &
\includegraphics[width=0.03\textwidth]{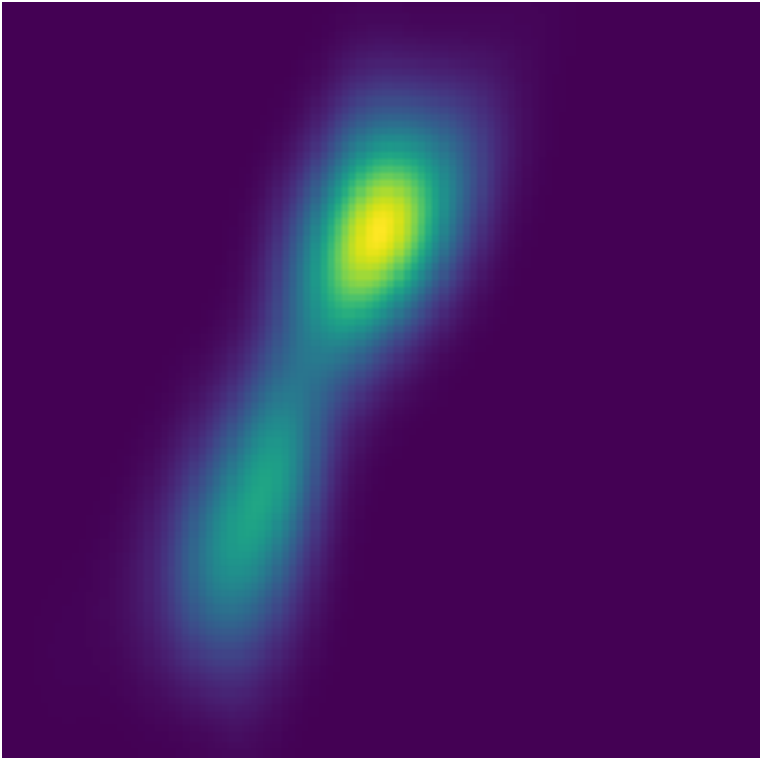} &
\includegraphics[width=0.03\textwidth]{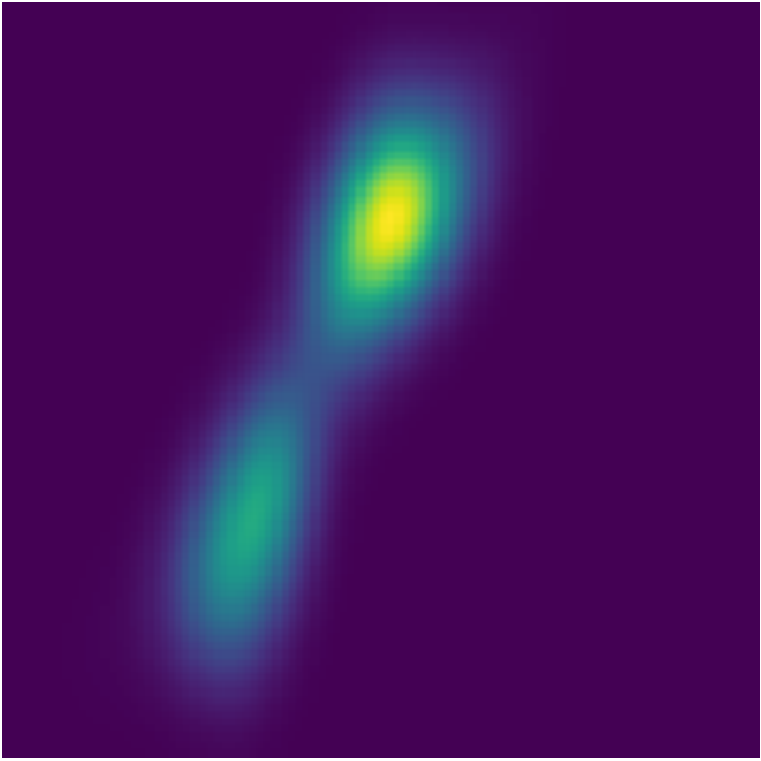} &
\includegraphics[width=0.03\textwidth]{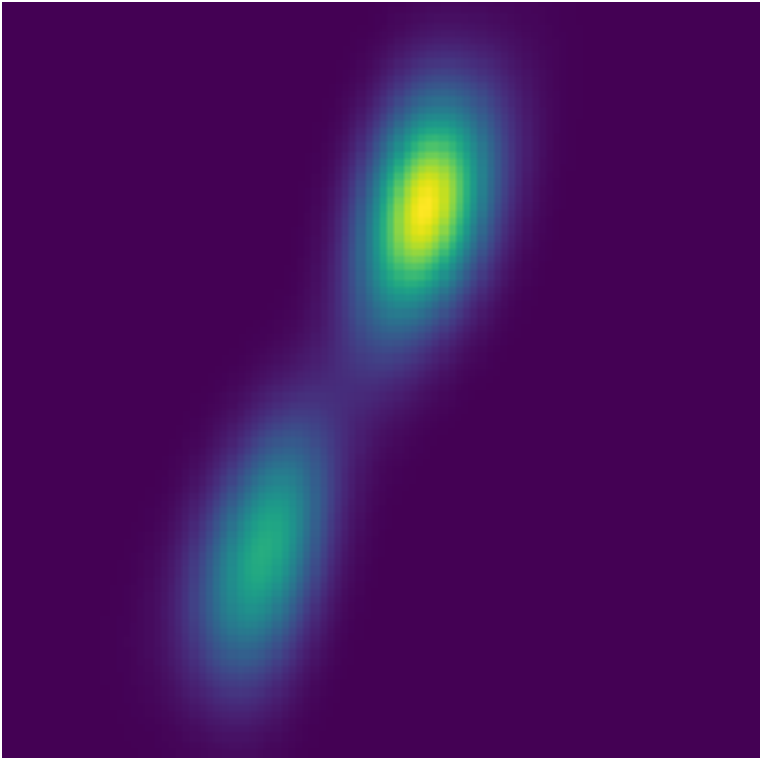} \\
\rotatebox{90}{\tiny ~SMC~~}~ &
\includegraphics[width=0.03\textwidth]{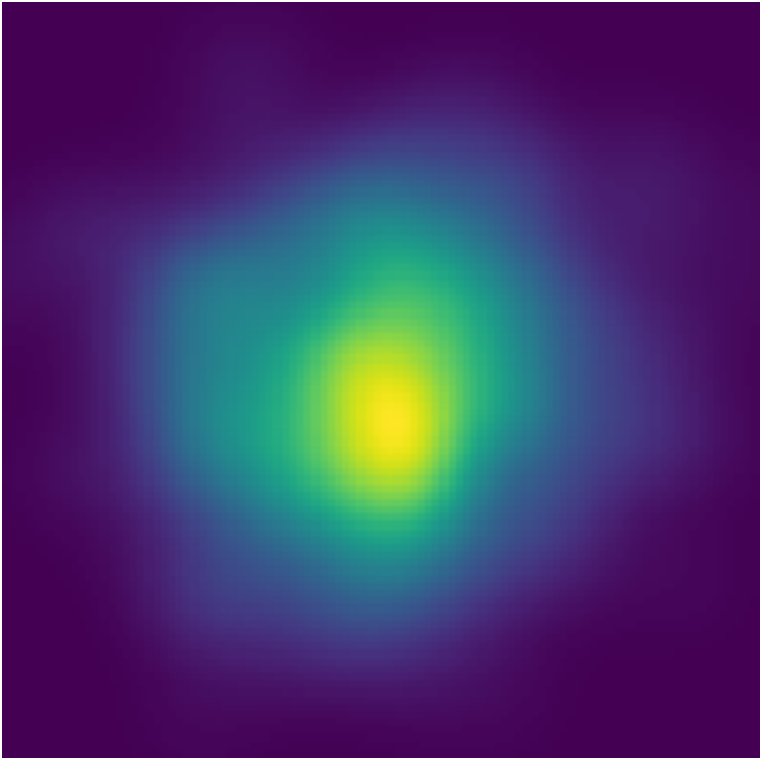} &
\includegraphics[width=0.03\textwidth]{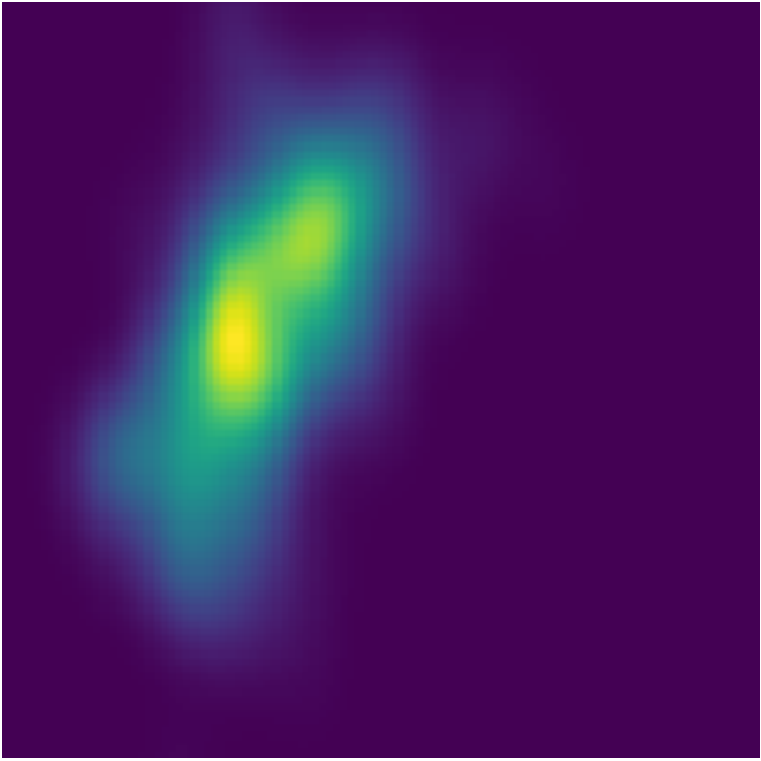} &
\includegraphics[width=0.03\textwidth]{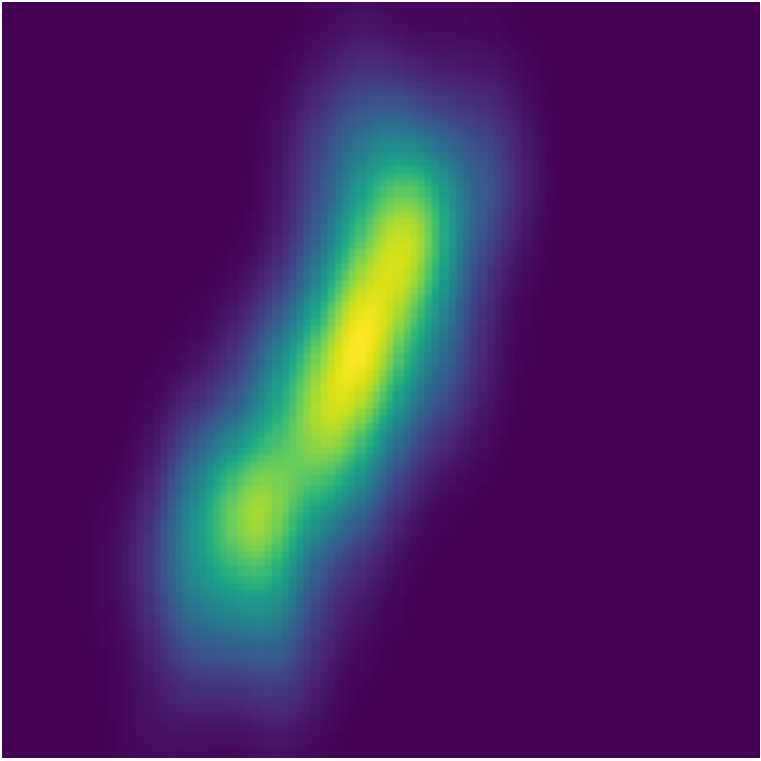} &
\includegraphics[width=0.03\textwidth]{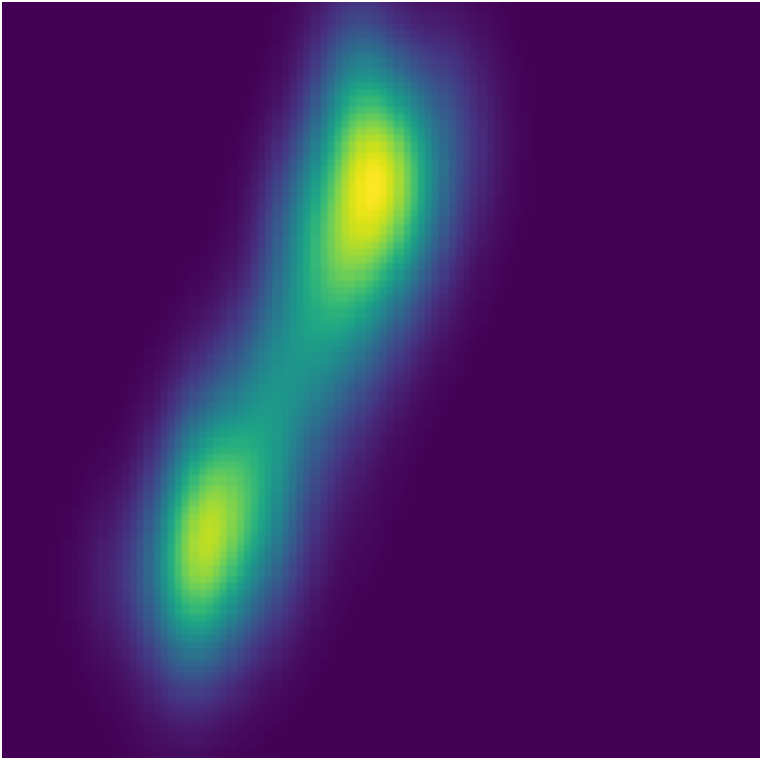} &
\includegraphics[width=0.03\textwidth]{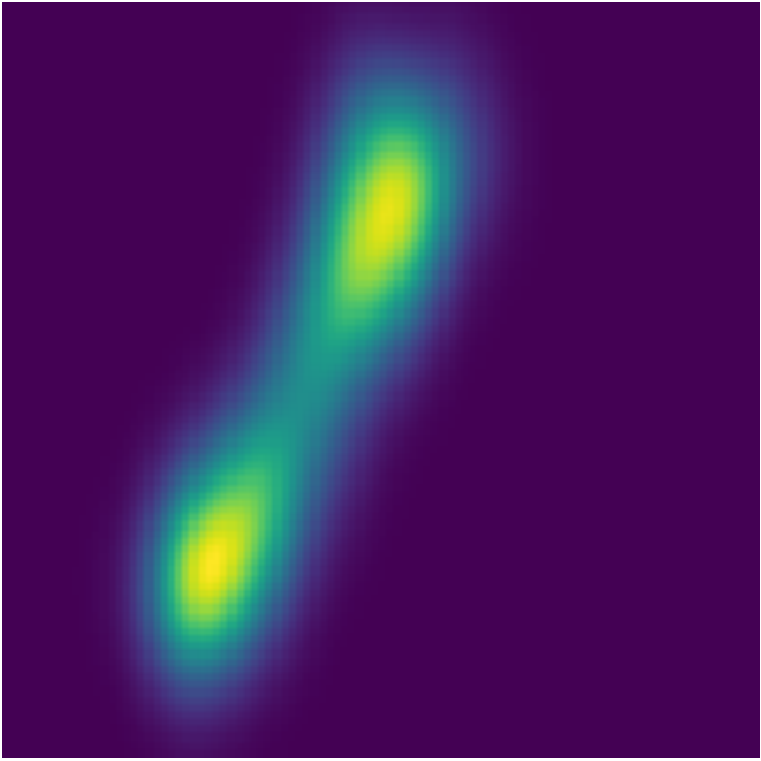} &
\includegraphics[width=0.03\textwidth]{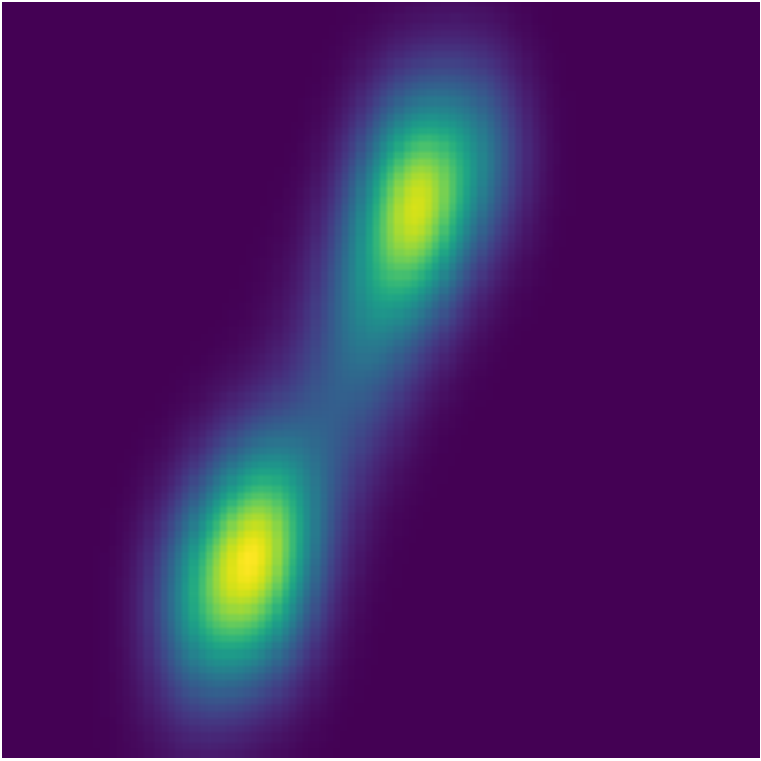} \\
	\end{tabular}
}
\vspace{-3mm}
	\caption{\small Visualization of the evolution of posterior density from left to right. In the end, PFBR density is closer to the true density than SMC density.}\label{fig:two-gaussian}
\vspace{-1mm}	
\end{figure}

{\bf Gaussian Mixture Model.}
Following the same setting as~\citet{WelTeh11}, we conduct an experiment on an interesting mixture model, where the observations 
$ o\sim \frac{1}{2}\big(\gN(x_1, 1.0) +\gN(x_1+x_2,1.0)\big)
$ and the prior $x_1, x_2 \sim \gN(0,1)$. The same as~\citet{DaiHeDaiSon16}, we set $(x_1,x_2) = (1,-2)$ so that the resulting posterior will have two modes: $(1,-2)$ and $(-1,2)$. During training, multiple sequences are generated, each of which consists of 30 observations.

\vspace{-0.5mm}
Compared to~\citet{WelTeh11} and~\citet{DaiHeDaiSon16}, our experiment is more challenging. First, they are only fitting one posterior given one sequence of observations via optimization, while PFBR will operate on sequences that are NOT observed during training without re-optimization. Second, they are only estimating the final posterior, while we aim at fitting every intermediate posteriors. Even for one fixed sequence, it is not easy to fit the posterior and capture both two modes, which can be seem from the results reported by~\citet{DaiHeDaiSon16}. However, our learned PFBR can operate on testing sequences and the resulting posteriors look closed enough to true posterior (See Fig.~\ref{fig:two-gaussian}).

{\bf Hidden Markov Model --- LDS.}
Our PFBR method can be easily adapted to hidden Markov models, where we will estimate the marginal posterior distribution $p(\vx_m|\gO_m)$. For instance, consider a Gaussian linear dynamical system (LDS):
\begin{align*}
    \vx_m = A \vx_{m-1} + \mathbf{\epsilon}_m, ~~~~  o_m  = B\vx_m + \mathbf{\delta}_m, 
\end{align*}
where $\mathbf{\epsilon}_m\sim \gN(0,\Sigma_1)$ and $\mathbf{\delta}_m\sim \gN(0,\Sigma_2)$.
The particles can be updated by recursively applying two steps:
\begin{align*}
    \text{(i)}~\tilde{\vx}_{m}^n = A {\vx}_{m-1}^n + \mathbf{\epsilon}_m,~~\text{(ii)}~{\vx}_{m}^n = \gF(\tilde{\gX}_m, \tilde{\vx}_m^n, o_{m+1} ),
\end{align*}
where $\tilde{\gX}_m:=\{\tilde{\vx}_m^n\}_{n=1}^N$.
The second step is a Bayesian update from  $p(\vx_m|\gO_{m-1})$ to $p(\vx_m|\gO_m)$ given the likelihood $p(o_m|\vx_m)$. The only tricky part is we do not have a tractable loss function in this case because of the integration $p(\vx_m|\gO_{m-1}) = \int p(\vx_{m}|\vx_{m-1})p(\vx_{m-1}|\gO_{m-1})\,d\vx_{m-1}$. Hence, at each stage $m$, we use the particles $\tilde{\gX}_m$ to construct an empirical density estimation $\hat{\pi}(\vx; \tilde{\gX}_m)$ as defined in~\eqref{eq:kde_prior} and then define the loss at each stage $m$ as
\[
    \textstyle{\sum_{n=1}^N} \log q_{m}(\vx_{m}^n) - \log p(o_m| \vx_m^n) - \log \hat{\pi}(\vx_m^n;\tilde{\gX}_m),
\]
where we replace the intractable density $p(\vx_m |\gO_{m-1})$ by $\hat{\pi}$. Given this loss, the PFBR operator $\gF$ can be learned.

In the experiment, we sample a pair of 2 dimensional random matrices $A$ and $B$ for the LDS model. We learn both PFBR and KBR from a training set containing multiple different sequences of observations. For KBR, we use an adapted version called Kernel Monte Carlo Filter~\citep{Kanagawa13}, which is designed for state-space models. We use Kalman filter~\citep{WelBis06} to compute the true marginal posterior for evaluation purpose.
\begin{figure}[h!]
\vspace{-2mm}
\begin{tabular}{cc}
    \includegraphics[width=0.44\linewidth]{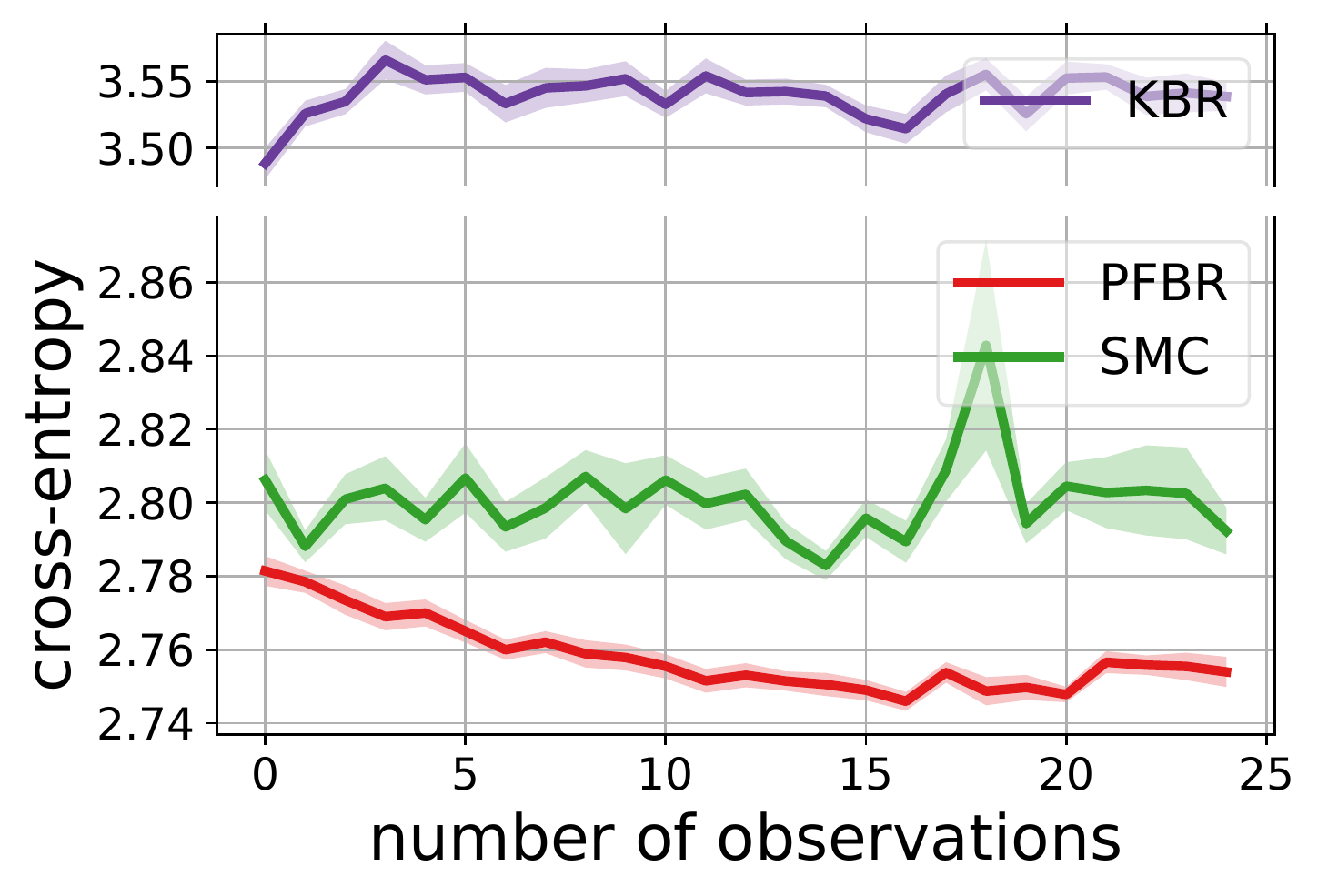} & \includegraphics[width=0.44\linewidth]{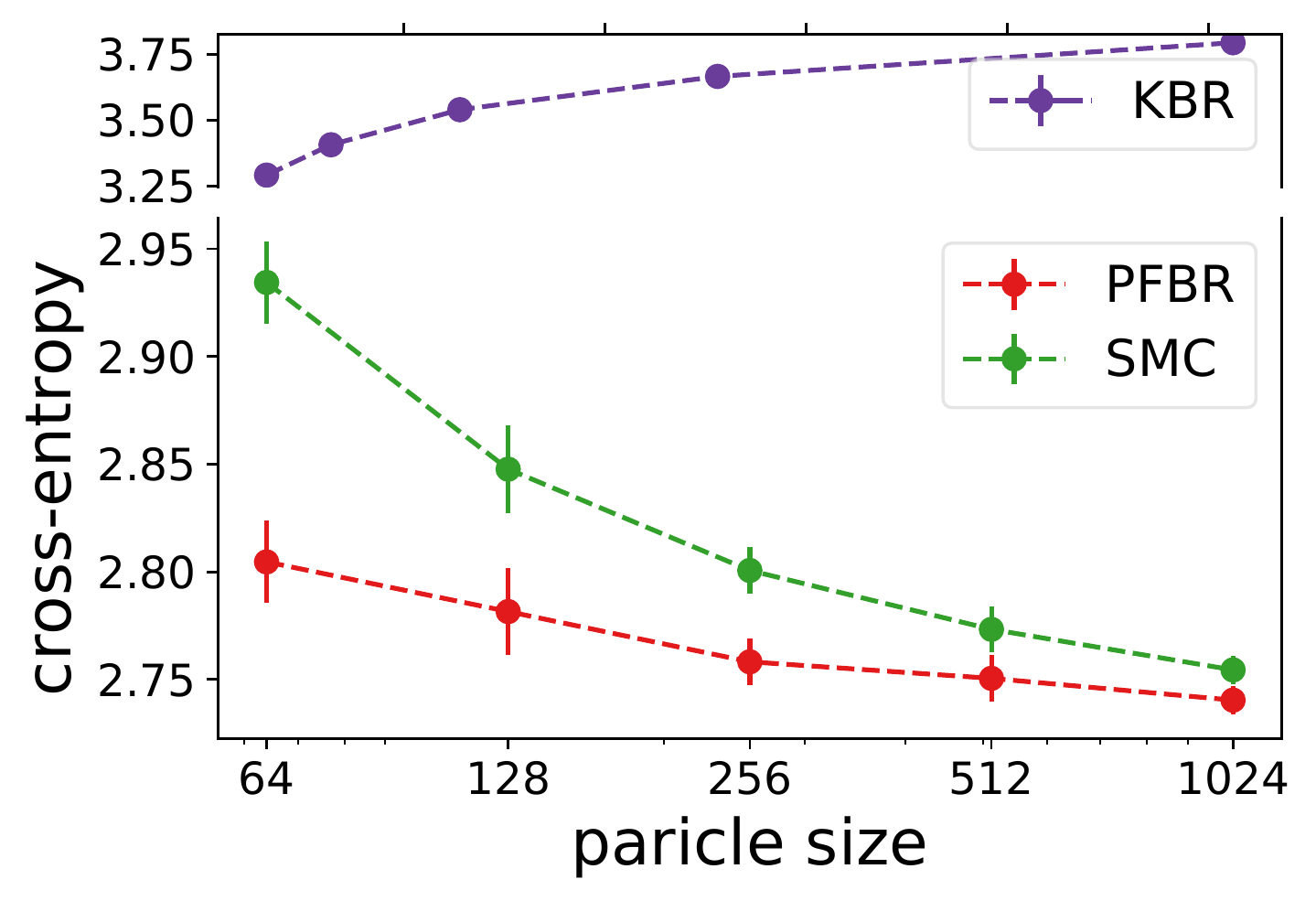} \vspace{-2mm}\\
    \multicolumn{2}{c}{
    {\scriptsize (a) Comparison of cross entropy }
    }\vspace{0.5mm}\\
     \includegraphics[width=0.44\linewidth]{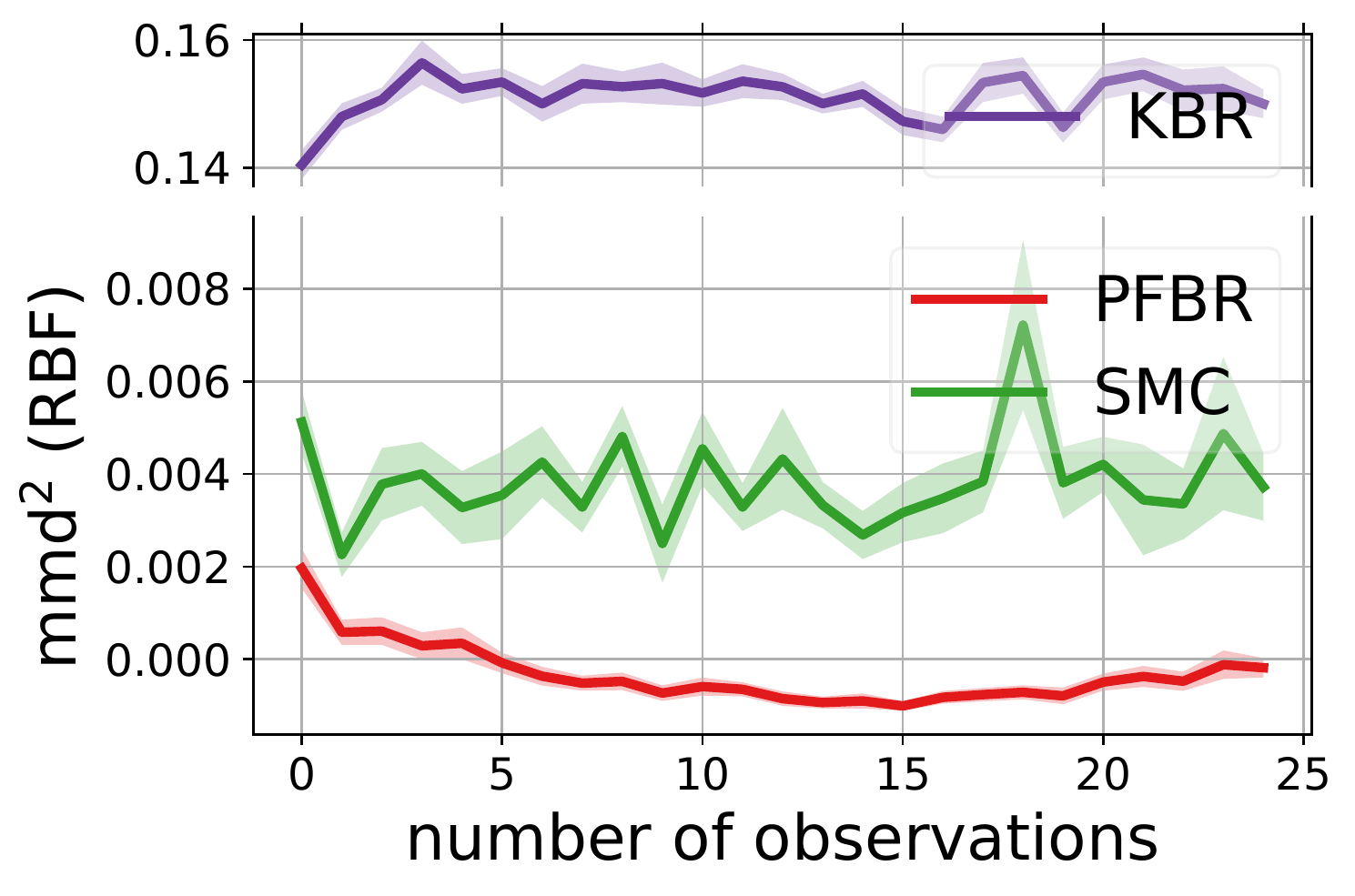} & \includegraphics[width=0.44\linewidth]{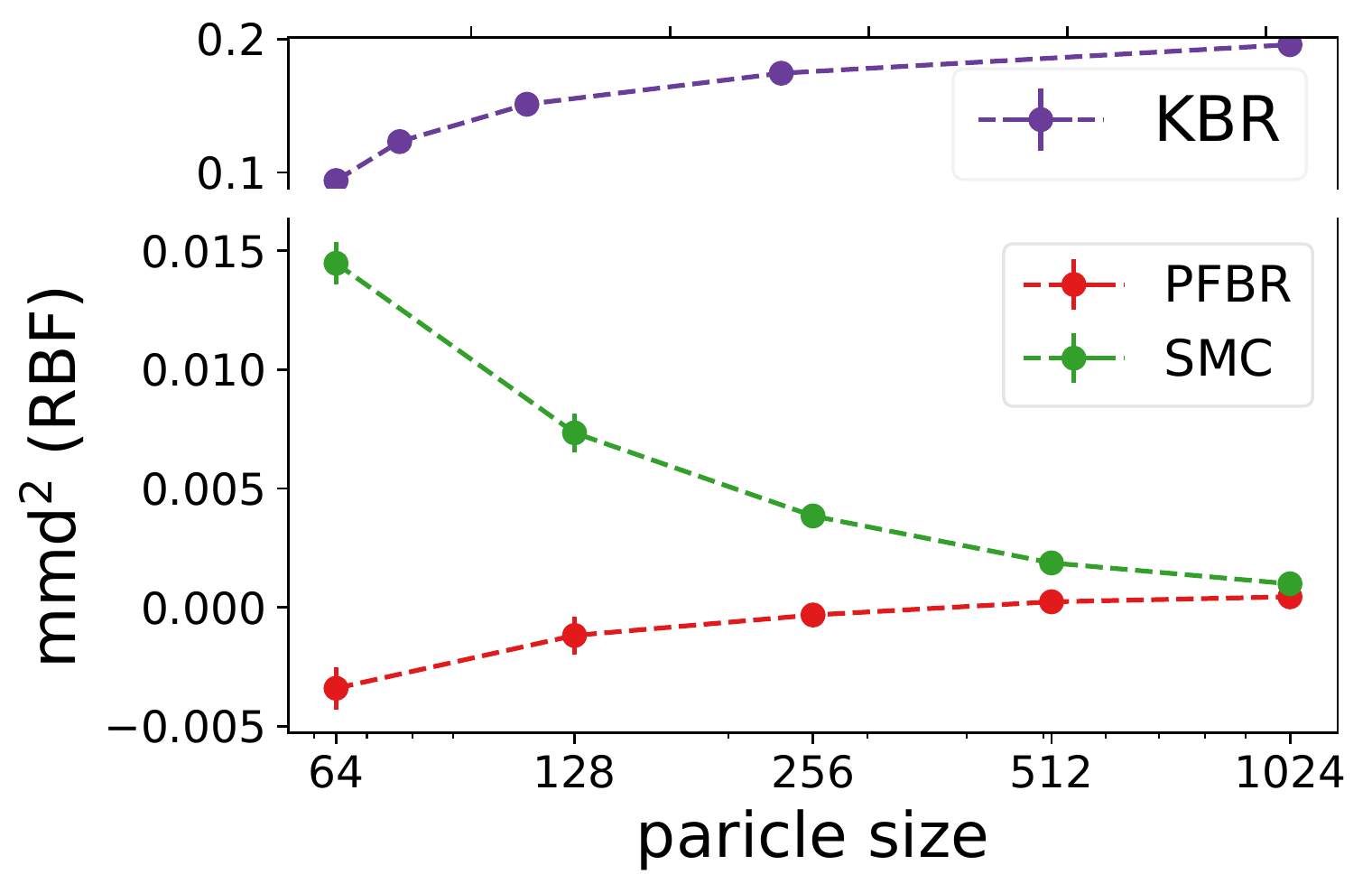}\vspace{-2mm}\\
     \multicolumn{2}{c}{
    {\scriptsize (b) Squared MMD with RBK kernel }
    }\vspace{0.5mm} \\
         \includegraphics[width=0.44\linewidth]{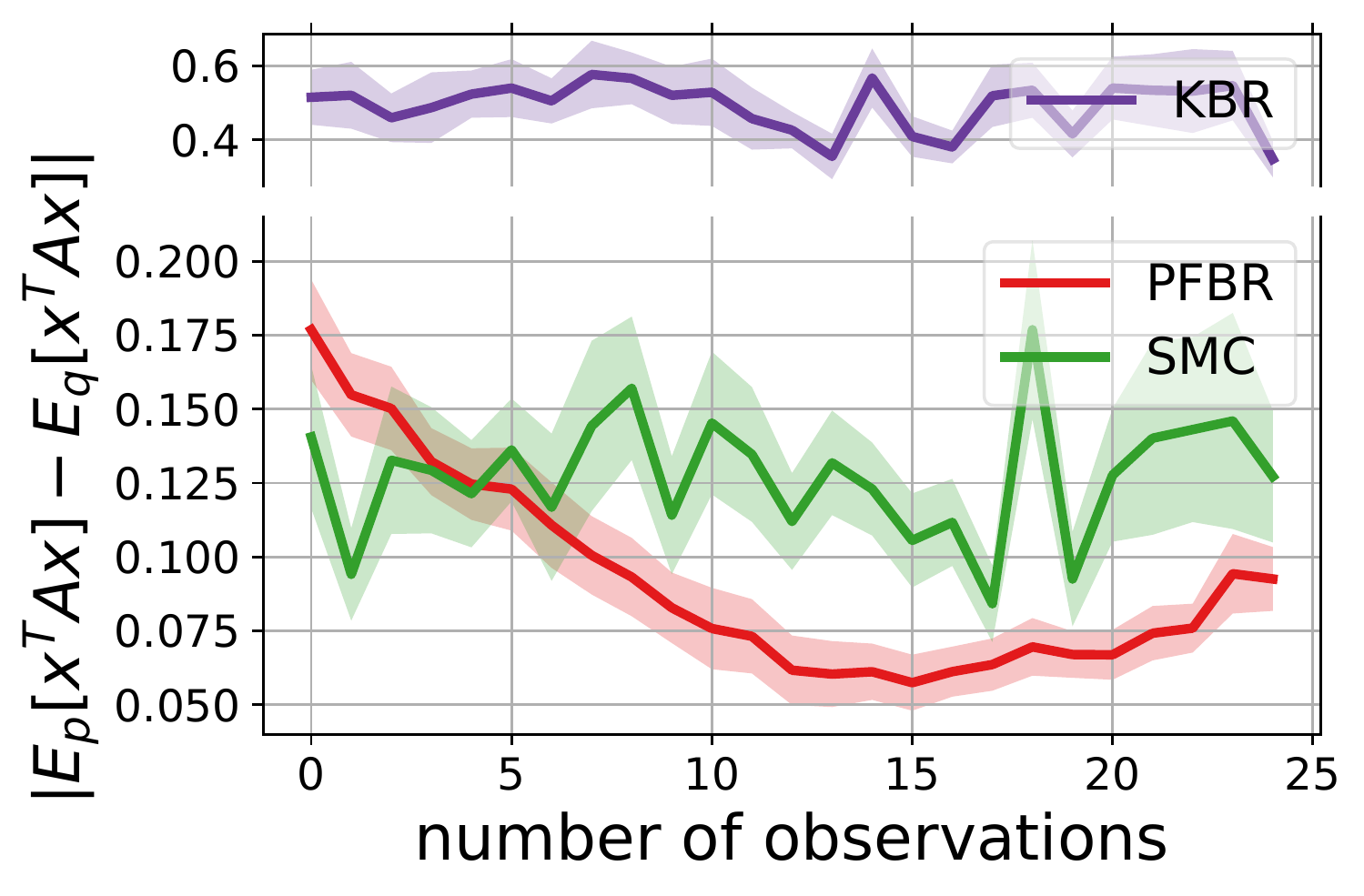} & \includegraphics[width=0.44\linewidth]{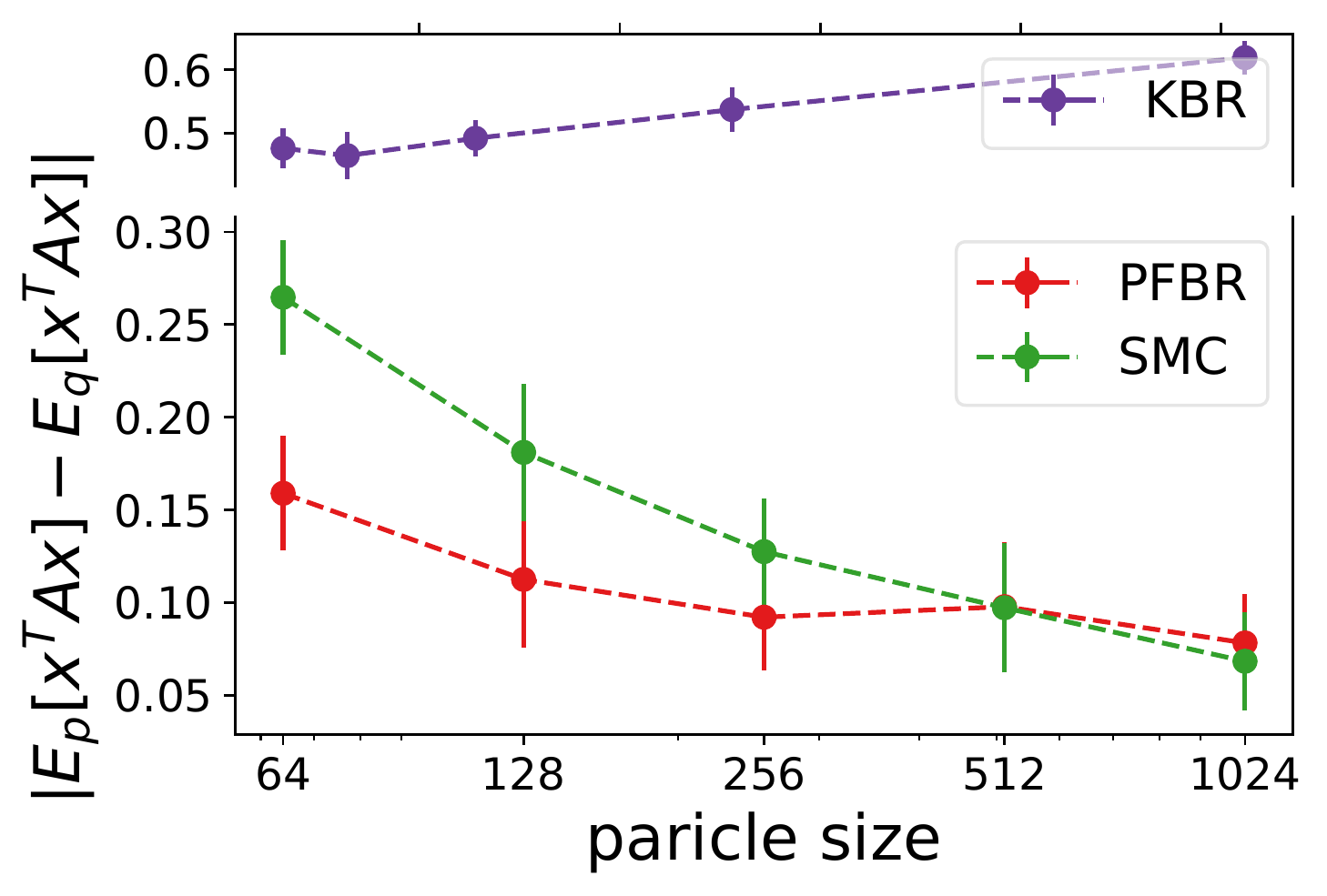}\vspace{-2mm}\\
     \multicolumn{2}{c}{
    {\scriptsize (c) Integral estimation on test function }
    }\vspace{-2mm}
\end{tabular}
    \vspace{-2mm}
    \caption{\small Experimental results for LDS. Only results evaluated on the testing set are shown. {\it Left:} estimation errors on every stage $m\in[25]$. {\it Right:} estimation errors for different particle sizes, which are firstly averaged over 25 stages for each task, and then averaged over 25 tasks. The error bar shows the standard error over tasks. We use the same PFBR operator trained with 1024 particles, even though during testing phase, we apply it on particles of difference sizes. See Appendix~\ref{apx:experiments-results} for more results.}
    \label{fig:exp-lds}
\end{figure}

Fig.~\ref{fig:exp-lds} compares our method with KBR and SMC (importance sampling with resampling) on a testing set containing 25 new sequences of ordered observations. We see that our learned PFBR can generalize to test sequences and achieve better and stabler performances.

{\bf Comparison to Variational SMC.} Autoencoding SMC {\small(AESMC)}, Filtering Variational Objectives, and Variational SMC are three recent variational inference approaches that approximate the posterior based on SMC and a neural proposal~\citep{anh2018autoencoding,maddison2017filtering,naesseth2018variational}. Since they share similar ideas, we implemented {\small AESMC} as a representative. We tried both {\small MLP} and {\small GRU} as mentioned in these papers. A comparison is made for 10-dimensional LDS (Table~\ref{tab:vsmc}), which shows PFBR is better even with much fewer particles.

\begin{table}[h!]
    \centering
    \small
    \setlength{\tabcolsep}{2pt}
    \begin{tabular}{|c|c|c|c|c|}
    \hline
    Algo & \#particles & cpu time (s) & gpu time (s) & cross-entropy  \\
    \hline
PFBR & 256 & 0.23 & 0.26 & 16.56 \\
SMC & 256 & 0.07 & 0.02 & 26.78 \\
ASMC-mlp & 256 & 0.17 & 0.07 & 19.66 \\
ASMC-gru & 256 & 0.18 & 0.07 & 19.38 \\
ASMC-mlp & 4096 & 2.23 & 0.25 & 17.63 \\
ASMC-gru & 4096 & 2.26 & 0.26 & 17.24 \\
SMC & 8192 & 3.87 & 0.12 & 17.60 \\
\hline
    \end{tabular}
    \vspace{-1mm}
    \caption{\small Numbers are averaged over 25 sequences with 25 observations each. For PFBR, gpu is faster when \#particles is larger.}
    \label{tab:vsmc}
\end{table}

{\bf Inference Time Comparison.} Table~\ref{tab:vsmc} also shows the inference time for updating posterior given one new observation. Though PFBR takes more time for the same \#particles (e.g., 256), to get closer to PFBR's performance, others need to use much more particles (e.g., 4096).

{\bf Bayesian Logistic Regression (BLR).}
We consider logistic regression for digits classification on the MNIST8M 8 vs. 6 dataset which contains about 1.6M training samples and 1932 testing samples. We reduce the dimension of the images to 50 by PCA, following the same setting as~\citet{DaiHeDaiSon16}. Two experiments are conducted on this dataset. For both experiments, we compare our method with SMC, SVI (stochastic variational inference~\citep{HofBleWanPai12}), PMD (particle mirror descent~\citep{DaiHeDaiSon16}), SGD Langevin (stochastic gradient Langevin dynamics~\citep{WelTeh11}) and SGD NPV (stochastic version of nonparametric variational inference~\citep{GerHofBle12}). This is a large dataset, so we use the techniques discussed in Section~\ref{sec:efficient-training} to facilitate training efficiency. 

{\bf BLR-Meta Learning.}
In the first experiment, we create a multi-task environment by rotating the first and second components of the features reduced by PCA through an angle $\psi$ uniformly sampled from $[-15\degree, 15\degree]$. Note that the first two components account for more variability in the data.
With a different rotation angle $\psi$, the classification boundary will change and thus a different classification task will be created. Also, a different sequence of image samples will result in different posterior distributions and thus corresponds to a different inference task.

We learn the operator $\gF$ from a set of training tasks, where each task corresponds to a different rotation angle $\psi$. After that, we use $\gF$ as Bayes' Rule for testing tasks and compare its performances with other stochastic methods or sampling methods. Test is done in an online fashion: all algorithms start with a set of particles sampled from the prior (hence the prediction accuracy at 0-th step is around 0.5). Each algorithm will make a prediction to the encountered batch of 32 images, and then observe their true labels. After that, each algorithm will update the particles and make a prediction to the next batch of images. Ideally we should compare the estimation of posteriors. However, since it is intractable, we evaluate the average prediction accuracy at each stage. Results are shown in Fig.~\ref{fig:blr-meta}.

\begin{figure}[h!]
    \vspace{-1mm}
    \begin{tabular}{cc}
    \includegraphics[width=0.455\linewidth]{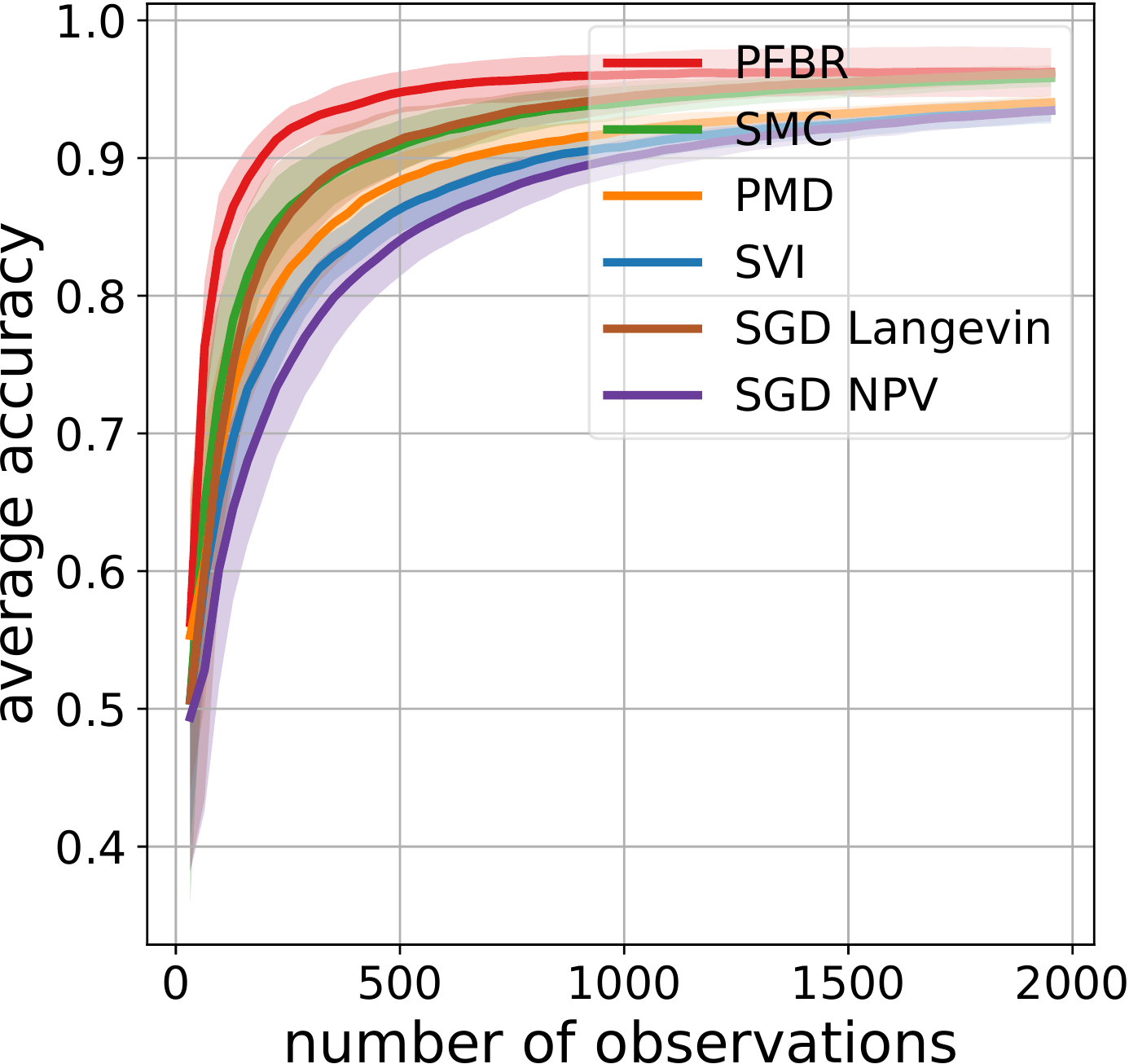}
    &
\includegraphics[width=0.45\linewidth]{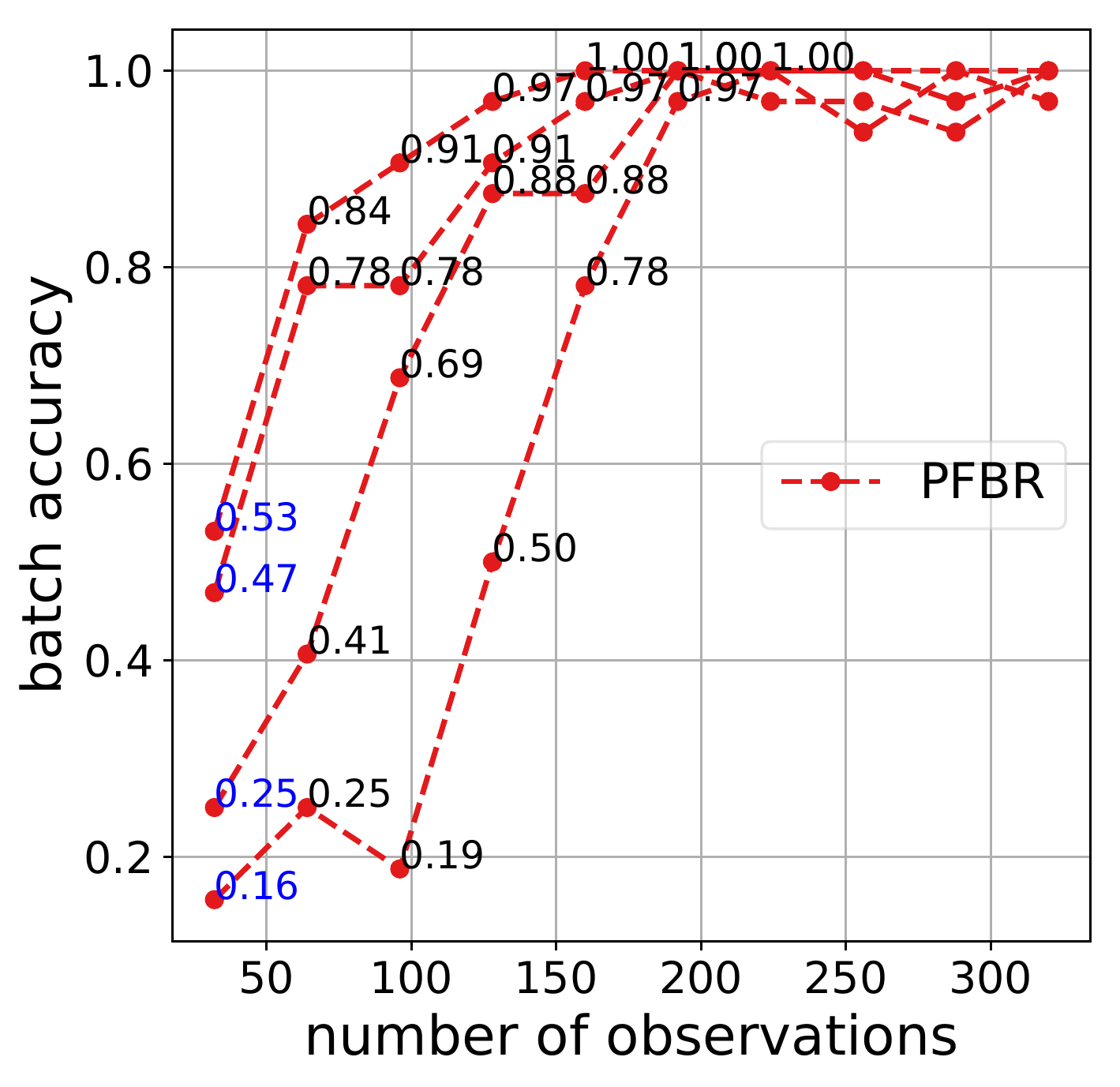}
    \end{tabular}
    \vspace{-2mm}
    \caption{\small Bayesian logistic regression on MNIST. {\it Left}: The average online prediction accuracy $\frac{1}{m}\sum_{t=1}^m r_t$ is evaluated, where $r_t$ is the accuracy for the $t$-th batch of images. The shaded area presents standard deviation of results over 10 testing tasks. {\it Right}: We collect some examples when the random initialization is farther from the posterior and gives worse initial prediction. PFBR $\gF$ updates those particles to gradually achieve a higher accuracy.}
    \label{fig:blr-meta}
\end{figure}

Note that we have conducted a sanity check to confirm the learned operator $\gF$ does not ignore the first 2 rotated dimensions and use the rest 48 components to make predictions. More precisely, if we zero out the first two components of the data and learn $\gF$ on them. The accuracy of the particles dropps to around 65\%. This further verifies that the learned PFBR indeed can generalize across different tasks. 

{\bf BLR-Variational Inference.}
For the second experiment on MNIST, we use PFBR as a variational inference method. That is to say, instead of accurately learning a generalizable Bayesian operator, the estimation of the posterior $p(\vx|\gO_{train})$ is of more importance. Thus here we do not use the loss function in~\eqref{eq:elbo} summing over all intermediate states, but emphasize more on the final error $\text{KL}(q(\vx)||p(\vx|\gO_{train}))$. After the training is finished, we only use the transported particles to perform classification on the test set but do not further use the operator $\gF$. The result is shown in Fig.~\ref{fig:lr-vi}, where $x$-axis shows number of visited samples during training. Since we use a batch size of 128 and consider 10 stages, the first gradient step of our method starts after around $10^3$ samples are visited.

\begin{figure}[h!]
\vspace{-2mm}
\centering
\includegraphics[width=0.55\linewidth]{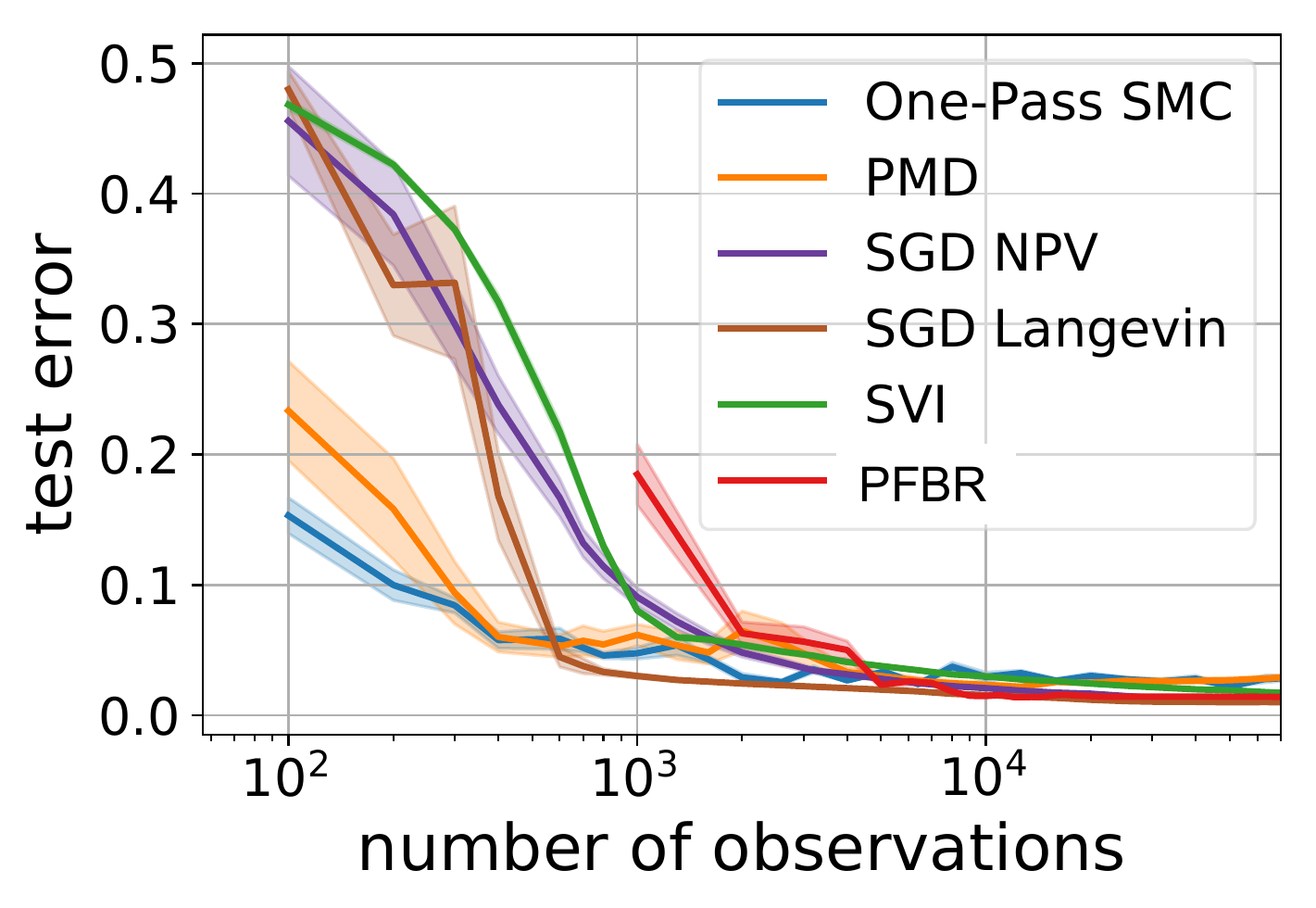}
\caption{\small PFBR as a variational inference method. The prediction accuracy of PFBR is comparable with state-of-art variational inference and sampling methods.}
\label{fig:lr-vi}
\end{figure}
\vspace{-3mm}
\section{Conclusion and Future Work}

In this paper, we have explored the possibility of learning an ODE-based Bayesian operator that can perform online Bayesian inference in testing phase and verified its generalization ability through both synthetic and real-world experiments. Further investigation on the parameterization of flow velocity $f$ (e.g., use a stable neural architecture~\citep{HaberRuthotto18} with HyperNetwork~\citep{HaDaiLe17}) and generating diverse prior distributions through a Dirichlet process can be made to explore a potentially better solution to this challenging problem. 
\newpage 

\section*{Acknowledgements}

{ 
This project was supported in part by NSF IIS-1218749, NIH
BIGDATA 1R01GM108341, NSF CAREER IIS-1350983,
NSF IIS-1639792 EAGER, NSF IIS-1841351 EAGER, NSF CNS-1704701, ONR
N00014-15-1-2340, Intel ISTC, NVIDIA, Google and Amazon
AWS. 
}

\bibliographystyle{icml2019}

\iftrue

\newpage
$ $
\appendix
\onecolumn

\section{Existence of Unified Flow Operator}\label{apx:existence}
\thmexist*

\begin{proof}
    By Theorem~\ref{thm:station-solution}, $\tilde{w}^*(q(\vx,t),t) := \nabla_x \log q(\vx,t)$ can induce the optimal state $\tilde{q}^*(\vx,\infty)=p(\vx|\gO_{m+1})$ and achieve a zero loss, $d = 0$. Hence, $\tilde{w}^*$ is an optimal closed-loop control for this problem.

Although in general closed-loop control has a stronger characterization to the solution, in a deterministic system like~\eqref{eq:opt-control2}, the optimal closed-loop control and the optimal open-loop control will give the same control law and thus the same optimality to the loss function\citep{Dreyfus64}. Hence, there exists an optimal open-loop control $w^
*=w^*(q(\vx,0),t)$ such that the induced optimal state also gives a zero loss and thus $q^*(\vx,\infty)=p(\vx|\gO_{m+1})$.

More specifically, when the system is deterministic, a state $q(\vx,t)$ is just a deterministic result of the initial state $q(\vx,0)$ and the dynamics. The optimal flow determined by $\tilde{w}^*(q(\vx,t),t)$ is 
\begin{align*}
f= \nabla_x\log p (\vx|\gO_{m})p (o_{m+1}|\vx )-\nabla_x \log q(\vx,t).
\end{align*}
The continuity equation gives
\begin{align*}
    \frac{\partial  q(\vx,t)}{\partial t}= &-\nabla_x\cdot\left( q\nabla_x\log p (\vx|\gO_{m})p (o_{m+1}|\vx)\right)\nonumber \\
    &+ \Delta_x q(\vx,t)\\
    &:=g(p(\vx|\gO_m)p(o_{m+1}|\vx), q(\vx,t))
\end{align*}

Hence, for any $\vx$,
\begin{align*}
    q(\vx,t) =&  q(\vx,0)+\int_0^t g(p(\vx|\gO_m)p(o_{m+1}|\vx), q(\vx,t))\,d\tau.\\
\end{align*}
The dynamcis $g$ is a fixed function of $p (\vx|\gO_{m})$, $p (o_{m+1}|\vx)$ and $q(\vx,t)$, so the solution of this initial value problem(IVP) $q(\vx,t)$ is a fixed function of $p (\vx|\gO_{m})$, $p (o_{m+1}|\vx)$, $q(\vx,0)$ and $t$, which can be written as
\begin{align*}
    q(\vx,t) =\text{Solve-IVP}(p (\vx|\gO_{m}),p (o_{m+1}|\vx),q(\vx,0),t).
\end{align*}
Finally, we can write the optimal open-loop control as
\begin{align*}
    &w^*(q(\vx,0),t) \\
    = &\nabla_x \log(\text{Solve-IVP}(p (\vx|\gO_{m}),p (o_{m+1}|\vx),q(\vx,0),t)).
\end{align*}
Hence, $w^*(q(\vx,0),t)$ has a fixed form across different $m$.  

\end{proof}

\section{Adjoint Method}\label{apx:adjoint-method}
To explain it more clearly, let us denote the evolution of the $n$-th particles at the $m$-th stage by $\vx^n_m(t)$ for $t\in[0,T]$. Note that $\vx_m^n(T)=\vx_{m+1}^n(0)$. (Then the notation $\vx_m^n$ in the main text will become $\vx_m^n(T)$.)

Recall the loss for each task:
 \begin{align*}
       \gL(\gT) =  \frac{1}{MN}{\sum_{m=1}^M} \sum_{n=1}^N \left(\log q_m^n(\vx_m^n(T),T)-\log p(\vx_m^n(T),\gO_m)\right).
 \end{align*}
The loss of one particle $\vx^n$ is \begin{align*}
    L^n:=&\frac{1}{M}\sum_{m=1}^M
    L_m^n,
\end{align*}
where 
\[
L_m^n:=-y_m^n(T) - \log p(\vx_m^n(T),\gO_m)
\]
and $y_m^n(t):=-\log q^n_m(x_m^n(t),t)$.

First, an adjoint process is defined as
\[\vp_m(t) :=\frac{\partial L^n}{\partial [\vx_m^n(t), y^n_m(t)]}.\] 
Denote $f_m(\vx(t),\theta) = f_{\theta}(\gX_m, o_{m+1}, \vx(t),t)$. During the $m$-th stage, the adjoint process follows the following differential equation
\begin{equation}\label{eq:backward-adjoint}
    \frac{d\vp_m}{dt} = -\frac{\partial}{\partial[\vx^n_m(t),y^n_m(t)]}\left[\begin{array}{c}
     f_m(\vx^n_m(t),\theta)  \\
     \nabla_{\vx}\cdot f_m(\vx^n_m(t),\theta) 
\end{array}
\right]^\top \vp_m(t).
\end{equation}
Note that
\begin{equation}
    \vp_m(T) =\sum_{m'\geq m}\frac{1}{M} \frac{\partial L^n_{m'}}{\partial [\vx_m^n(T),y_m^n(T)]}.
\end{equation}

Claim: The gradient of the loss is the solution of a backward ODE. That is to say, $\frac{\partial L^n}{\partial \theta} = \vz_1(0)$, if $ \vz_M(T) =\mathbf{0}$ and
\begin{align}
    \frac{d\vz_m(t)}{dt}= &- \left[\begin{array}{c}
         \frac{\partial f_m}{\partial \theta}(\vx^n_m(t),\theta)  \\
         \frac{\partial}{\partial \theta} \big[\nabla_{\vx}\cdot f_m(\vx^n_m(t),\theta)\big]
    \end{array}    
    \right]^\top
    \vp_m(t),\label{eq:backward-gradient1}
\end{align}
and $\vz_m(T)=\vz_{m+1}(0)$, for $m=0,\cdots, M-1$.

\begin{proof}
    First, we can compute $\frac{d}{dt}\frac{\partial L^n}{\partial \theta}$:
    \begin{align*}
       &\frac{d}{dt}\frac{ \partial L^n}{\partial \theta}= \frac{\partial}{\partial \theta}
      \sum_{m=1}^M \left( \frac{\partial L^n}{\partial \vx_m^n(t)}^\top\frac{d\vx_m^n(t)}{dt}+  \frac{\partial L^n}{\partial y^n_m(t)}\frac{dy_m^n(t)}{dt}\right)\\
         &= \frac{\partial}{\partial \theta}\sum_{m=1}^M\Big[ \vp_m(t)^\top \left[\begin{array}{c}
     f_m(\vx^n_m(t),\theta)  \\
     \nabla_{\vx}\cdot f_m(\vx^n_m(t),\theta) 
\end{array}
\right]\Big]\\
&=\sum_{m=1}^M \left[\begin{array}{c}
         \frac{\partial f_m}{\partial \theta}(\vx^n_m(t),\theta)  \\
         \frac{\partial}{\partial \theta} \big[\nabla_{\vx}\cdot f_m(\vx^n_m(t),\theta)\big]
    \end{array}    
    \right]^\top
    \vp_m(t)
    \end{align*}
    Next, we have 
    \[
   0- \frac{\partial L^n}{\partial \theta}=-\int_{t=0}^T
    \frac{d}{dt} \frac{\partial L^n}{\partial \theta}=\sum_{m=1}^M \int_{t=0}^T -\left[\begin{array}{c}
         \frac{\partial f_m}{\partial \theta}(\vx^n_m(t),\theta)  \\
         \frac{\partial}{\partial \theta} \big[\nabla_{\vx}\cdot f_m(\vx^n_m(t),\theta)\big]
    \end{array}    
    \right]^\top
    \vp_m(t) = \vz_M(T)-\vz_1(0).
    \]
    Hence, $\frac{\partial L^n}{\partial \theta}=\vz_1(0)$ if $\vz_M(T)=\mathbf{0}$.
\end{proof}
An algorithm for computing $\frac{\partial L}{\partial \theta}$ is summarized in Algorithm~\ref{alg_streamline_cfr}. A nice python package of realizing this algorithm is provided by~\citet{ChenRubanova18}.

\begin{algorithm}[h!]
  \DontPrintSemicolon
  \SetKwFunction{Grad}{Grad}
  \SetKwProg{Fn}{Function}{:}{}
  \SetKwFor{uFor}{For}{do}{}
  \SetKwFor{ForPar}{For all}{do in parallel}{}
  \SetKwFor{ForAll}{For all}{do}{}
\SetKwComment{Comment}{$\triangleright$\ }{}
  \SetCommentSty{mycommfont}
  
\Fn{\Grad{$\theta,\gX_0,p(o|\vx),\gO_M$}}{
Denote $f_{\theta}^m = f_{\theta}(\gX_m,o_{m+1},\vx(t),t)$\;\Comment*[r]{notation}

Set $y_0^n = -\log p(\vx_0^n)$ for each $\vx_0^n \in \gX_0$\;

\ForAll{$n=1$ to $N$}{
    \uFor{$m=0$ to $M-1$}{
    $\left[\begin{array}{c}
         \vx_{m+1}^n  \\
         y_{m+1}^n 
    \end{array}\right]\gets \left[\begin{array}{c}
         \vx_{m}^n  \\
         y_{m}^n 
    \end{array}\right] + \displaystyle{\int_0^T }\left[\begin{array}{c}
         f_{\theta}^m  \\
        \nabla\cdot f_{\theta}^m
    \end{array}\right]dt $
    }
    
    Set $\vp^n_M(T) = \mathbf{0}$ and $\vz^n_M(T)=\mathbf{0}$\;
    
    \uFor{$m=M$ to $1$}{
   $\vp^n_m(T) \gets \vp^n_m(T) + \frac{1}{M}\frac{\partial L_m^n}{\partial[\vx_m^n,y_m^n]}$\;
    
      Solve ODEs in~\eqref{eq:flow},~\eqref{eq:density_evolve},~\eqref{eq:backward-adjoint} and~\eqref{eq:backward-gradient1} for $\vx_m^n(t)$, $\vp^n_m(t)$ and $\vz^n_m(t)$ backwardly from $T$ to $0$\;
    
    Set $\vx_{m-1}^n(T)=\vx_m^n(0)$, $\vp_{m-1}^n(T)=\vp_m^n(0)$ and $\vz_{m-1}^n(T)=\vz_m^n(0)$ \;
    }
}

  \KwRet $ \frac{1}{N}\sum_{n=1}^N\frac{\partial L^n}{\partial \theta}= \frac{1}{N}\sum_{n=1}^N \vz^n_1(0) $   
}  

\caption{Adjoint Method of Computing the Gradient}
\label{alg_streamline_cfr}
\end{algorithm}
\newpage
\section{Experiment Details}\label{apx:experiments}
\subsection{Parameterization}\label{apx:parameterize}
Overall we parameterize the flow velocity as 
\begin{align*}
    f = \vh\left( \textstyle{\frac{1}{N}\sum_{n=1}^N} \bm{\phi}(\vx_m^n),o_{m+1},\vx(t),t \right ),
\end{align*}
where both $\bm{\phi}$ and $\vh$ are neural networks. For instance, let $\text{ctx}=[\textstyle{\frac{1}{N}\sum_{n=1}^N}\bm{\phi}(\vx_m^n)^\top,o_{m+1}^\top]$ be the context of this conditional flow, where $\bm{\phi}$ is a dense feed-forward neural network, a specific neural architecture we use in the experiment is
\begin{align}
 f=& \text{Gated}_k\left(\cdots[\text{ctx}, \text{Gated}_2\left([\text{ctx}, \text{Gated}_1\left([\text{ctx},\vx(t)^\top]^\top,t\right)]^\top,t\right)]^\top\cdots ,t \right),\\
    &\text{where}~\text{Gated}_j(\vy,t)=(W_j\vy+\vb_j)*\sigma(t \vv_j + \vc_j) + t\vc_j,
\end{align}
where $*$ is element-wise multiplication. The number of layers $k$ can be tuned, but in general $\vh$ is a shallow network.

\subsection{Evaluation Metric}
\begin{paragraph}{MMD$^2$}
The maximum mean discrepancy (MMD) of the true posterior $p$ and the estimated posterior $q$ is defined as
\begin{align*}
    \text{MMD}[\gF,p,q] := \sup_{f\in\gF}(\E_{x\sim p}[ f(x)] - \E_{y\sim q}[ f(y)]).
\end{align*}
When $\gF$ is a unit ball in a characteristic RKHS, ~\citet{GreBorRasSchetal12} showed that the squared MMD is
\begin{align*}
    \text{MMD}^2[\gF,p,q]=\E[k(x,x')]-2\E[k(x,y)]+\E[k(y,y')],
\end{align*}
 where $x, x'\sim p$ and $y,y'\sim q$.
\end{paragraph}

\begin{paragraph}{Cross-entropy}
Evaluating the KL divergence is equivalent to evaluating the cross-entropy.
\begin{align}
    \E_{x\sim p} -\log q(x) \approx \frac{1}{n}\sum_{n=1}^N(-\log q(x^n)),
\end{align}
where $q(x)$ is approximated by kernel density estimation on the set of particles obtained from different sampling methods.
\end{paragraph}
\begin{paragraph}{Integral Evaluation}
When the true posterior is a Gaussian distribution $\gN(\mu, \Sigma)$, the expectation of the following test functions have closed-form expressions.
\begin{itemize}[nosep, nolistsep, wide]
    \item $\E[\vx] = \mu$
    \item $\E[\vx^\top A \vx] = tr(A\Sigma) + \mu^\top A\mu$
    \item $\E[(A\vx+\va)^\top(B\vx+\vb)] = tr(A\Sigma B^\top)+(A\mu+\va)^\top(B\mu + \vb)$
\end{itemize}
\end{paragraph}
\newpage
\section{More Experimental Results}\label{apx:experiments-results}
\subsection{Multivariate Guassian Model}
\begin{figure}[h!]
\centering
 \begin{tabular}{ccc}
       \includegraphics[width=0.23\textwidth]{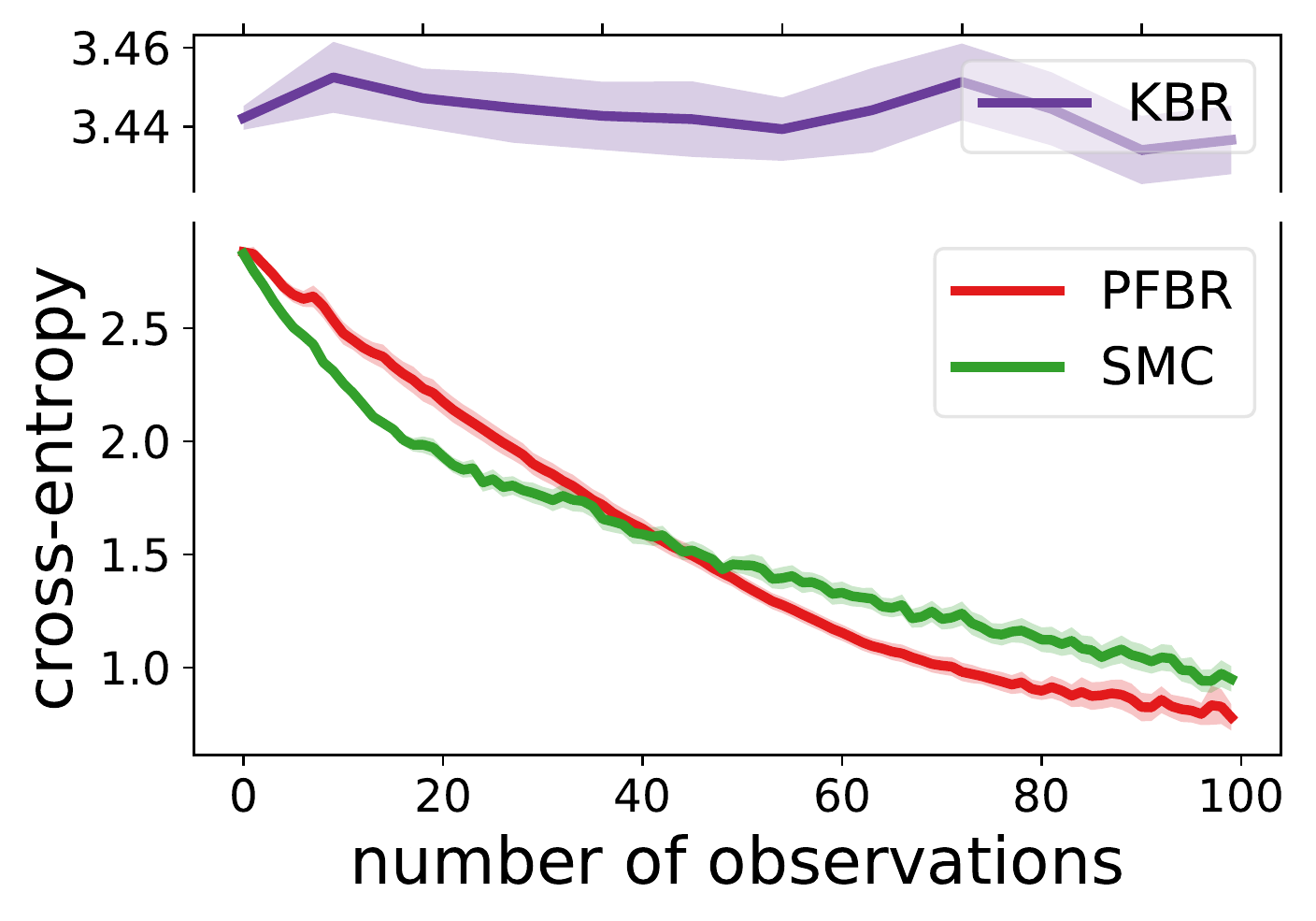} &  \includegraphics[width=0.23\textwidth]{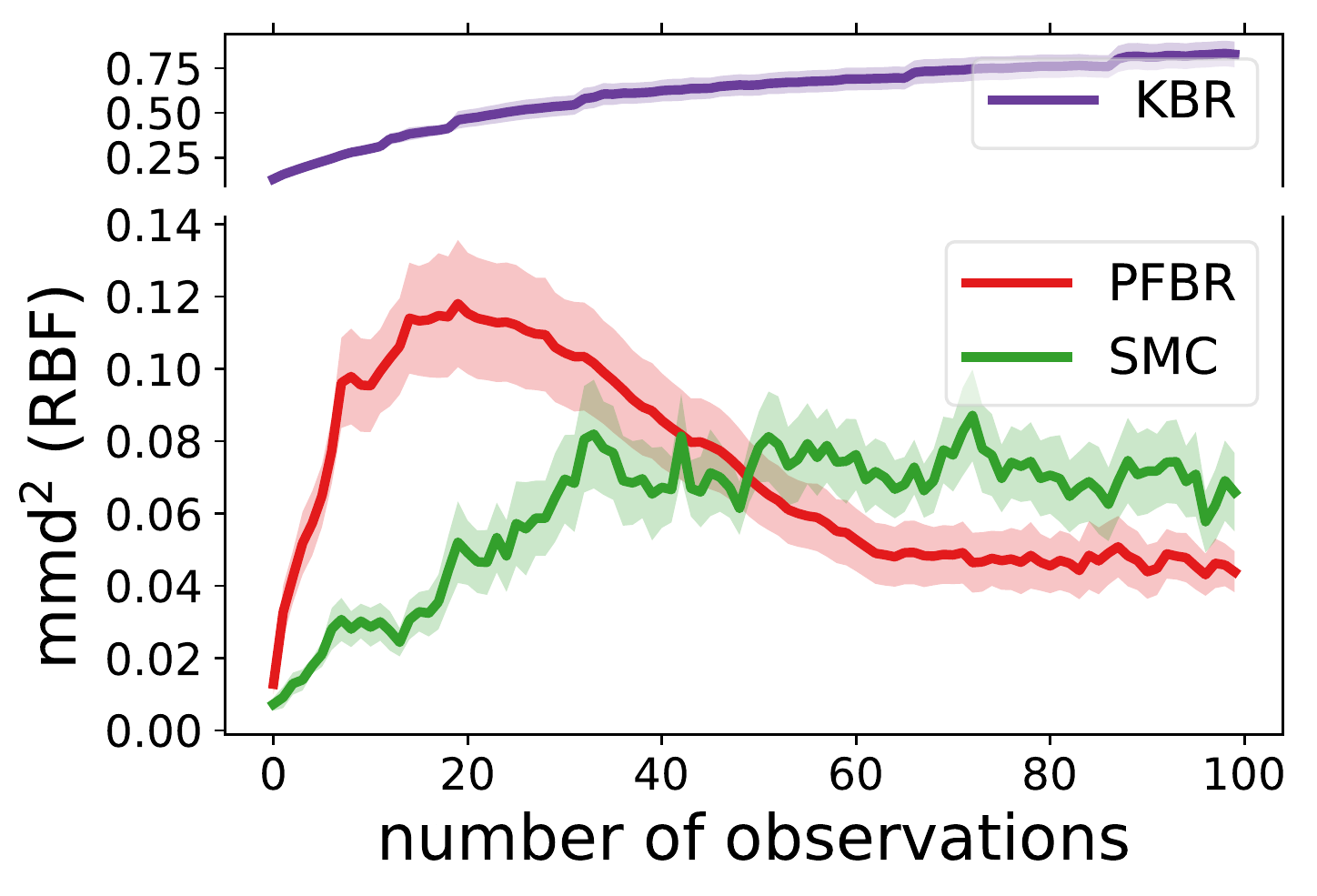} & \includegraphics[width=0.23\textwidth]{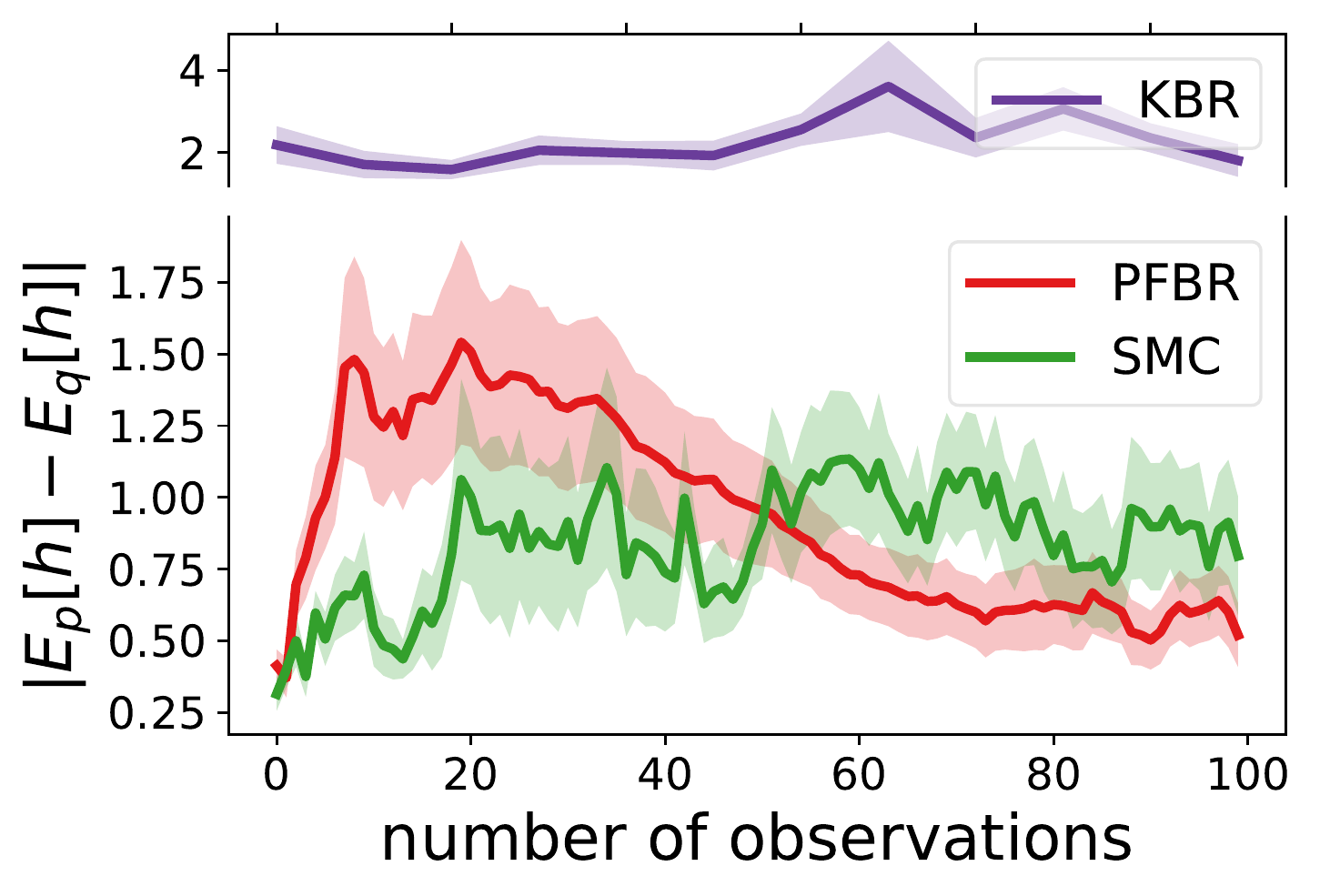}
       \\
       (a) cross-entropy  & (b) MMD$^2$ with RBF kernel&(c) Integral estimation
   \end{tabular}
    \label{fig:apx_mvn}
    \caption{Experimental results on 2 dimensional multivariate Gaussian model.}
\end{figure}

\subsection{LDS Model}
\begin{figure}[h!]
   \begin{tabular}{cccc}
       \includegraphics[width=0.23\textwidth]{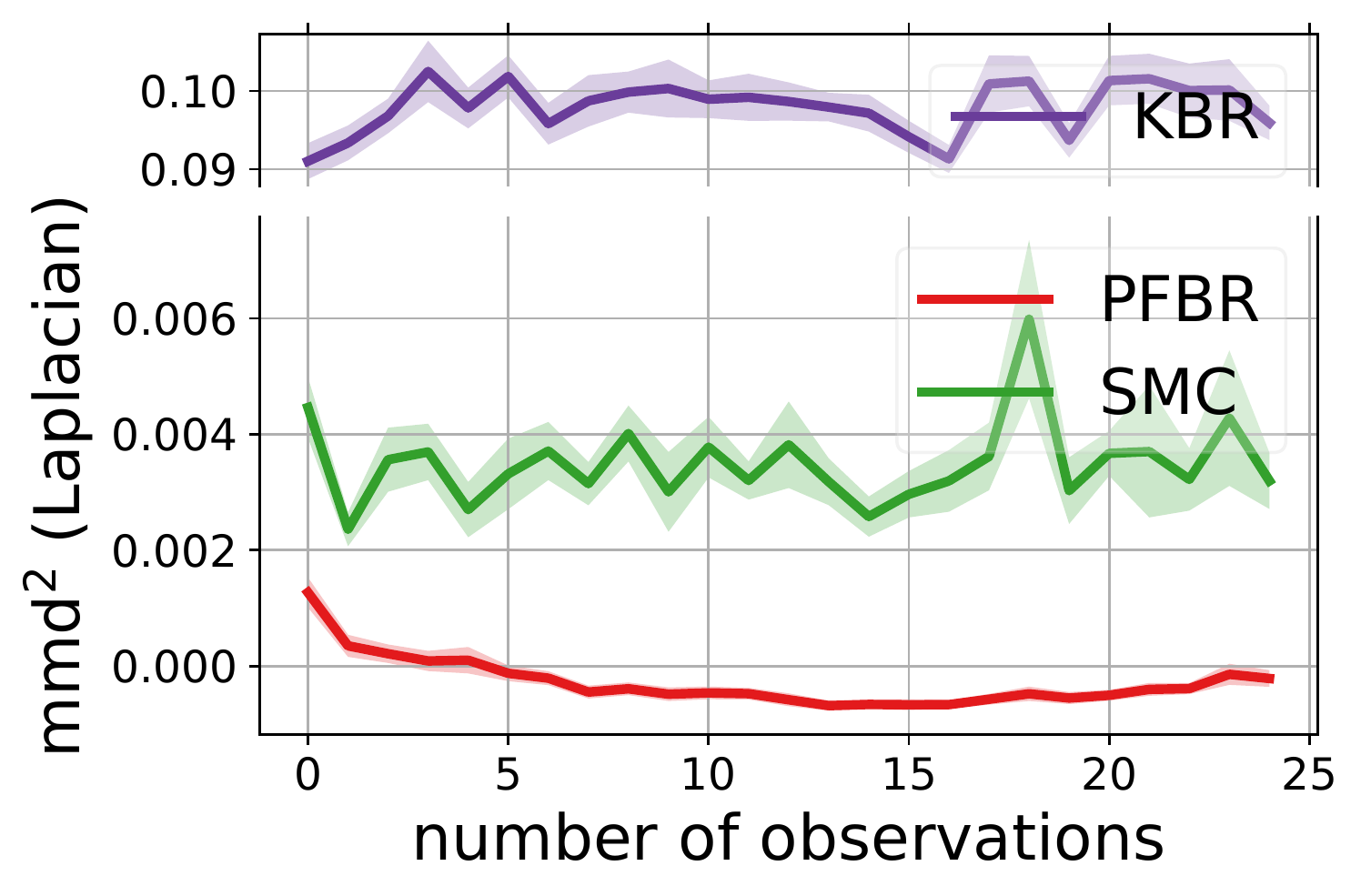} &  \includegraphics[width=0.23\textwidth]{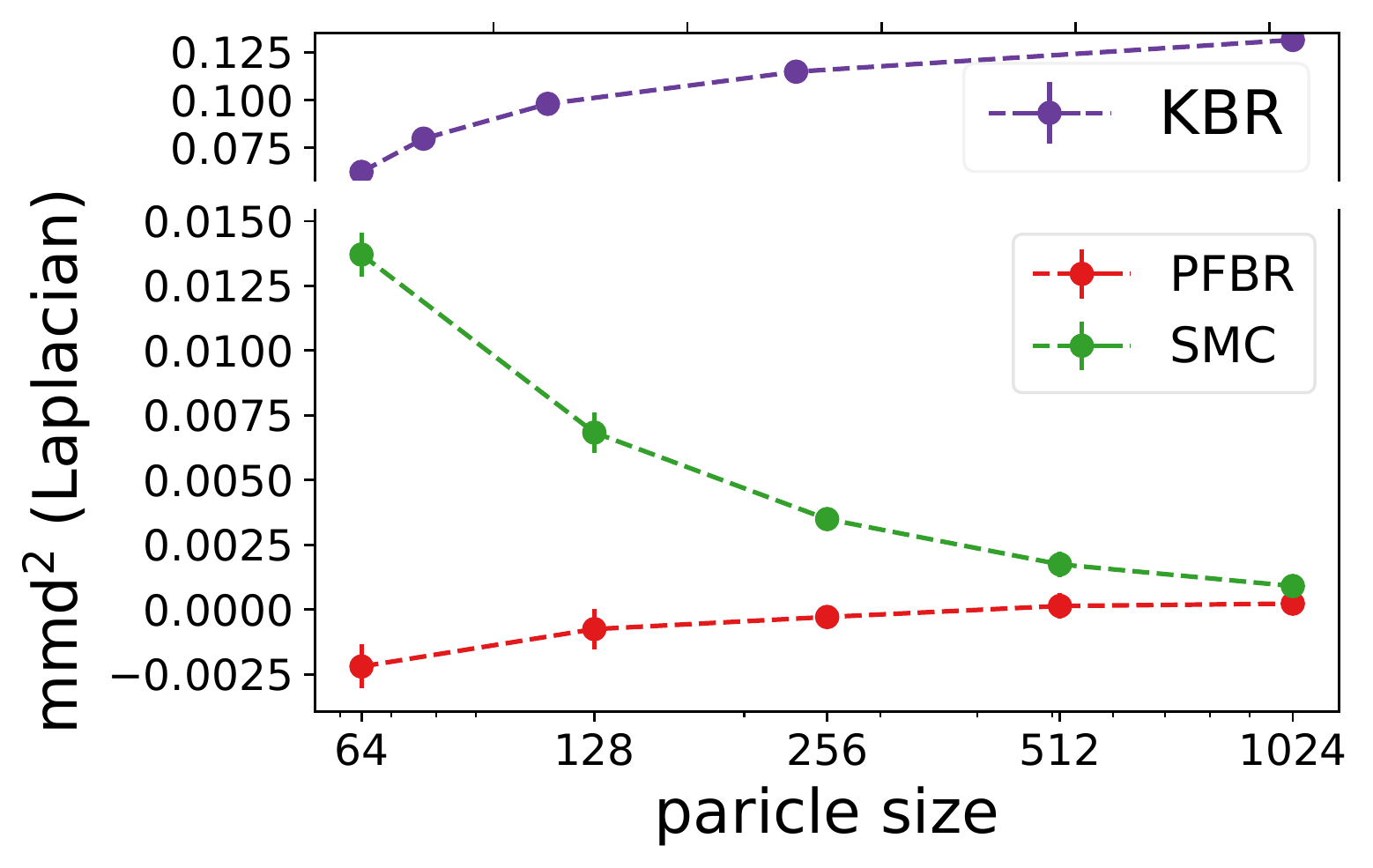} & \includegraphics[width=0.23\textwidth]{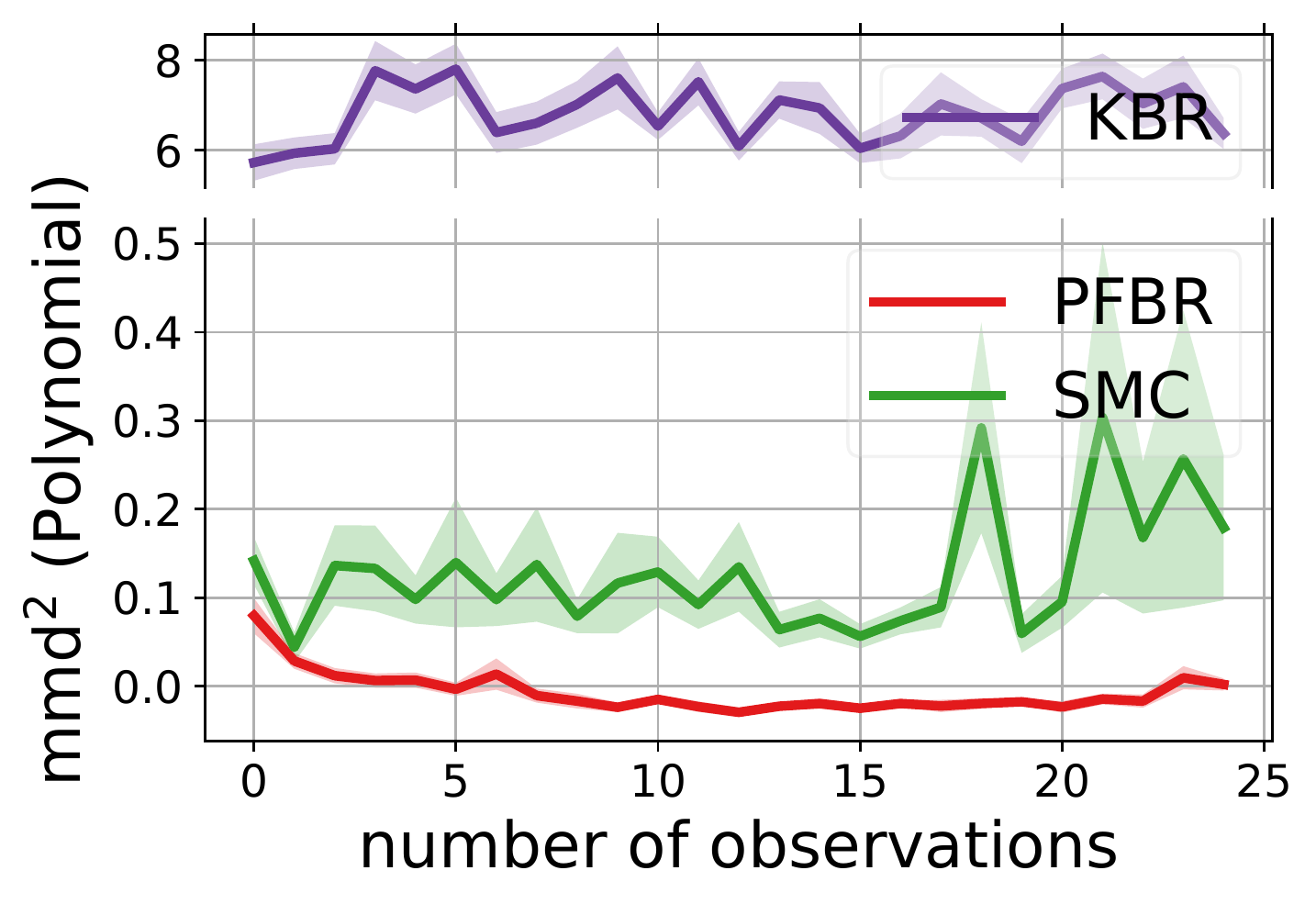} &
       \includegraphics[width=0.23\textwidth]{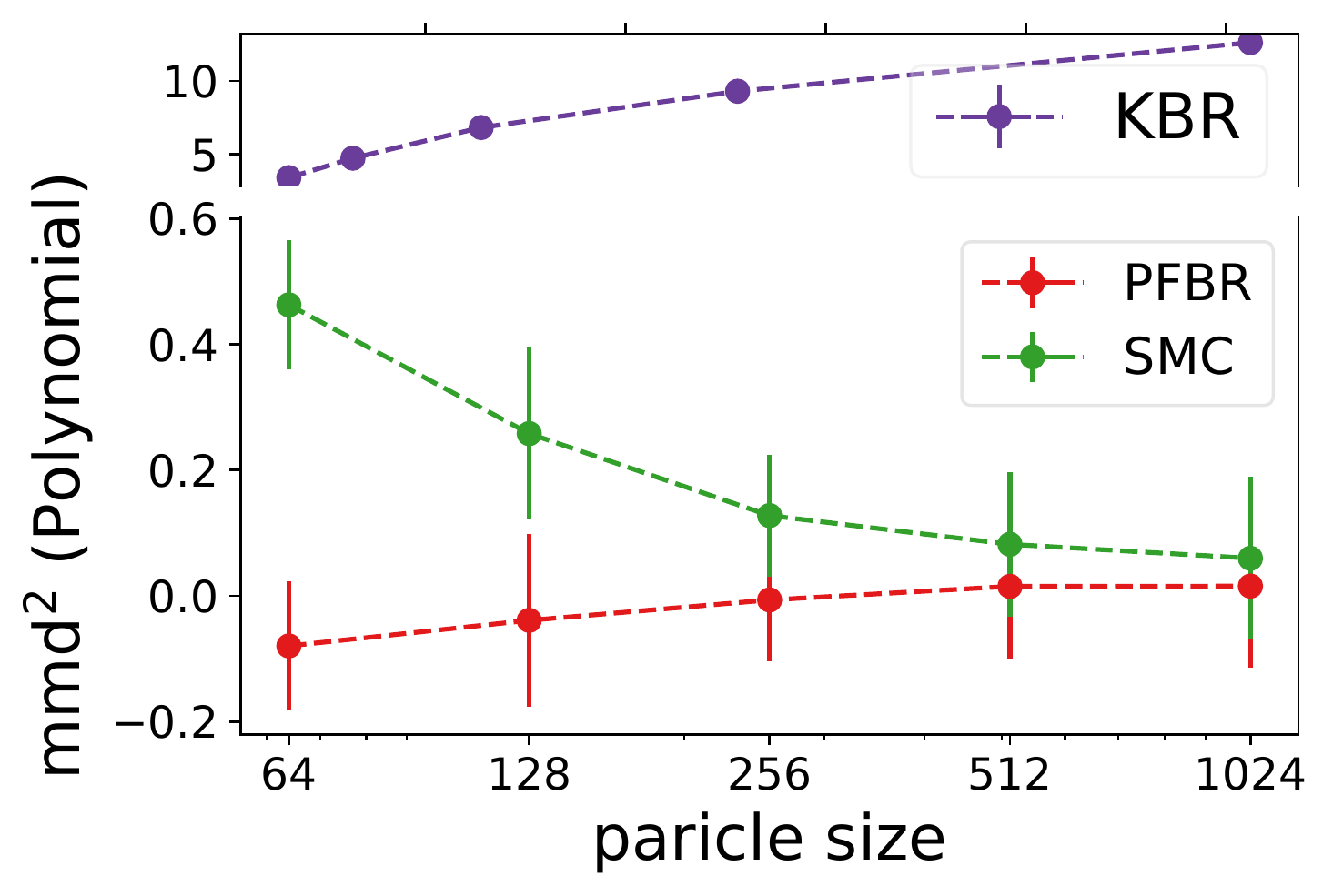}
       \\
       \multicolumn{2}{c}{(a) MMD$^2$ with Laplacian kernel} & 
       \multicolumn{2}{c}{(b) MMD$^2$ with Polynomial kernel} \\
       \includegraphics[width=0.23\textwidth]{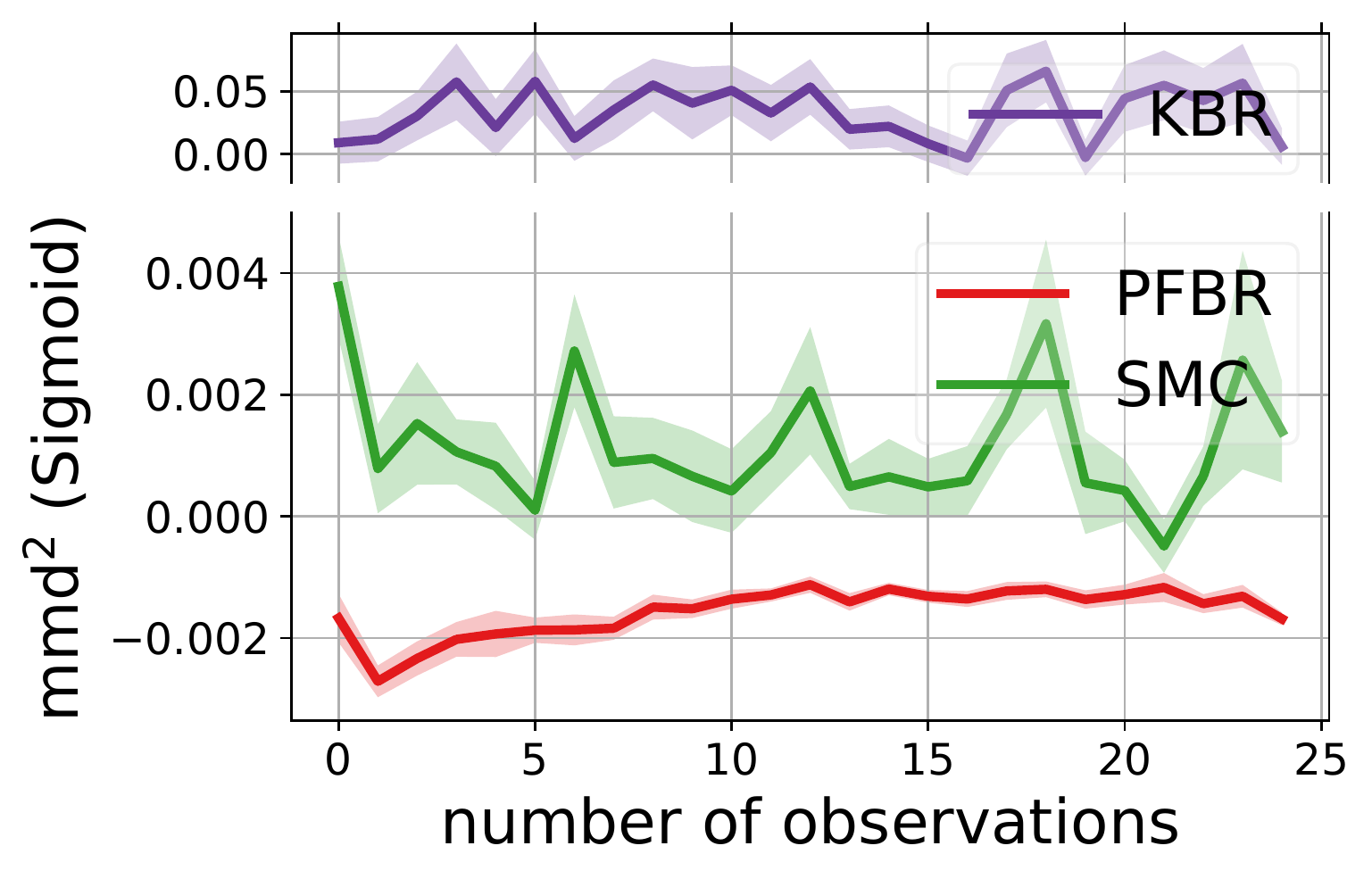}
       & 
       \includegraphics[width=0.23\textwidth]{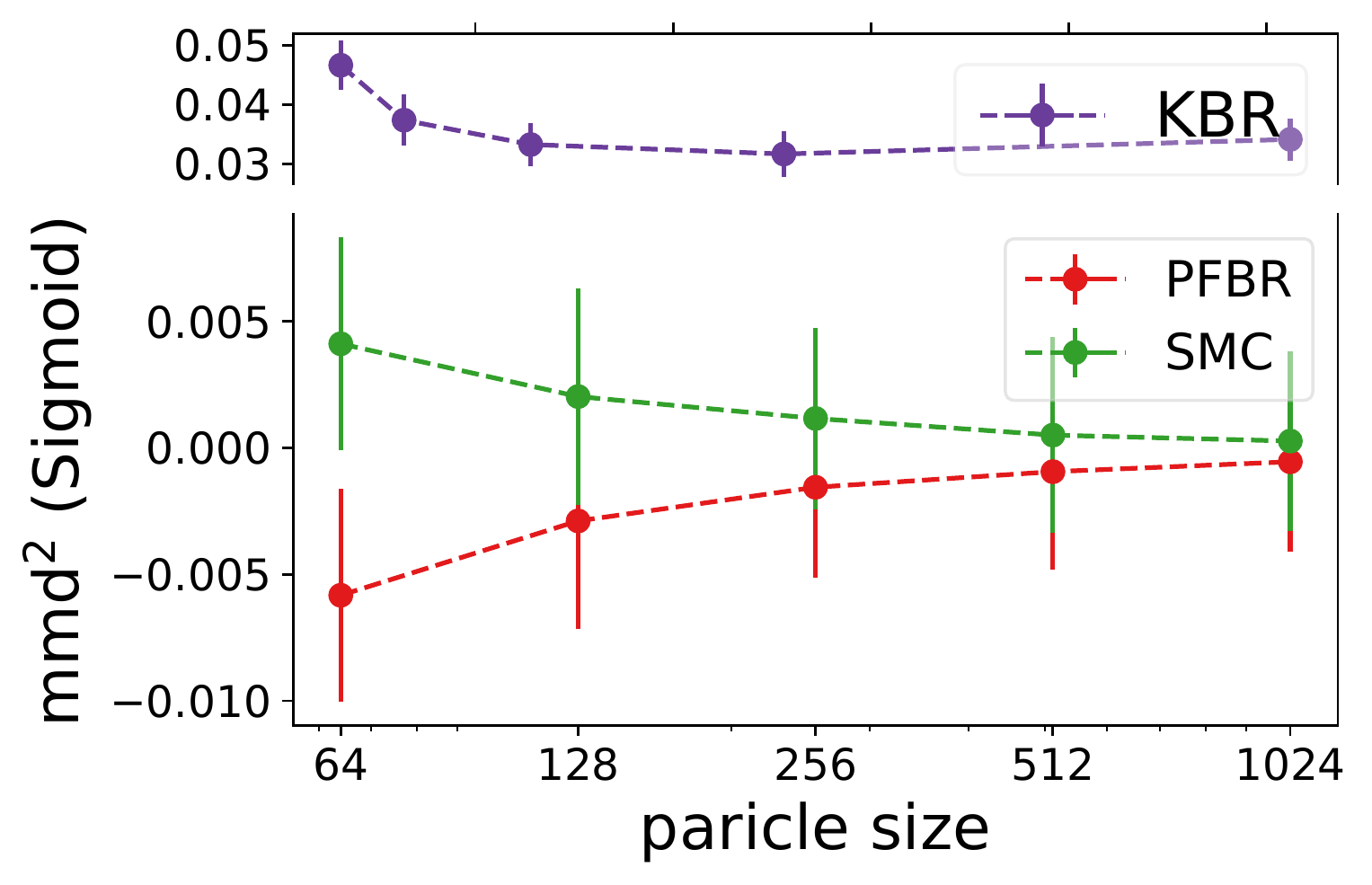}
       &
       \includegraphics[width=0.23\textwidth]{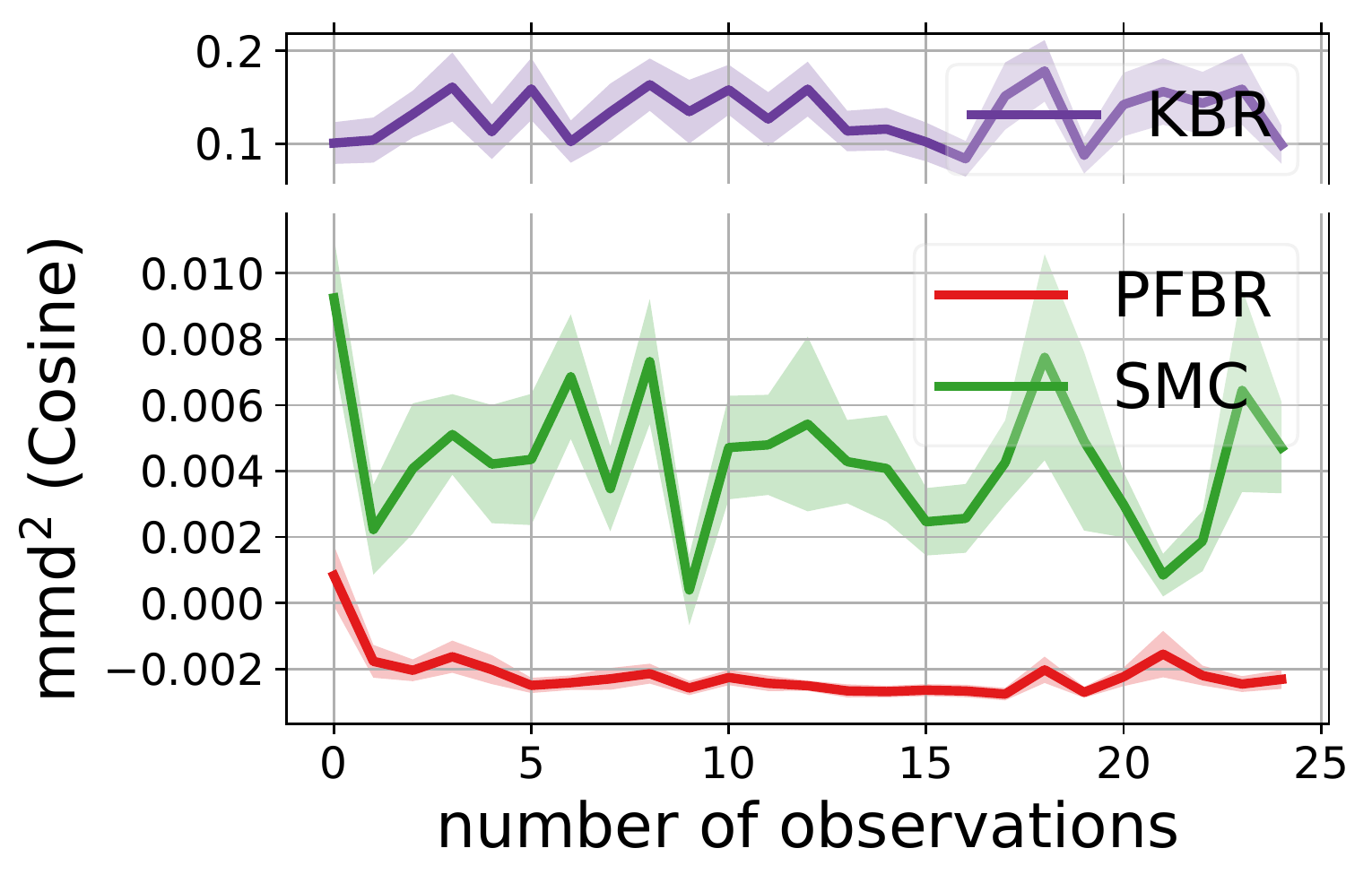}
       &
       \includegraphics[width=0.23\textwidth]{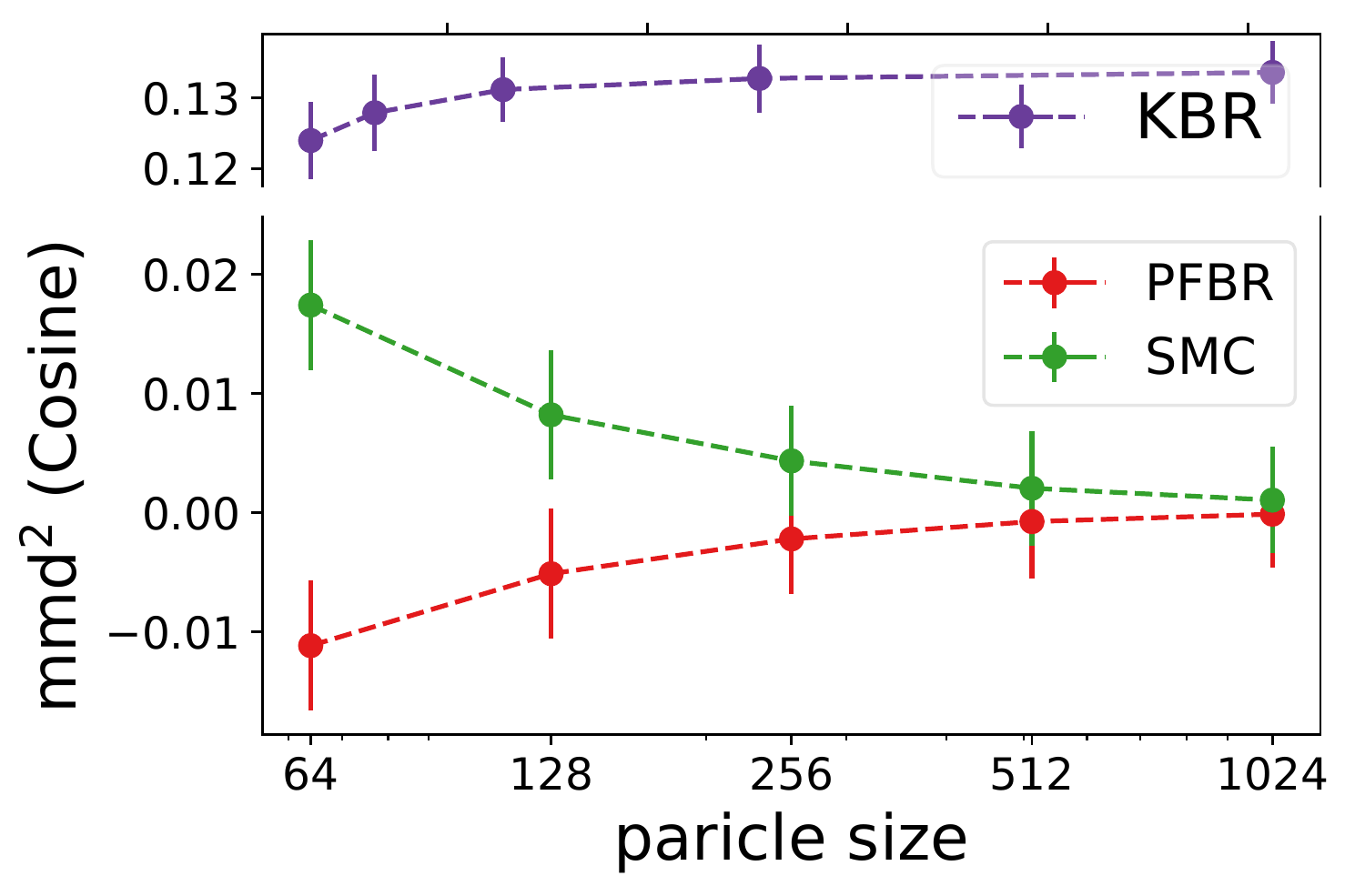} \\
       \multicolumn{2}{c}{(c) MMD$^2$ with Sigmoid kernel} & 
        \multicolumn{2}{c}{(c) MMD$^2$ with Cosine kernel} 
        \\
     \includegraphics[width=0.23\textwidth]{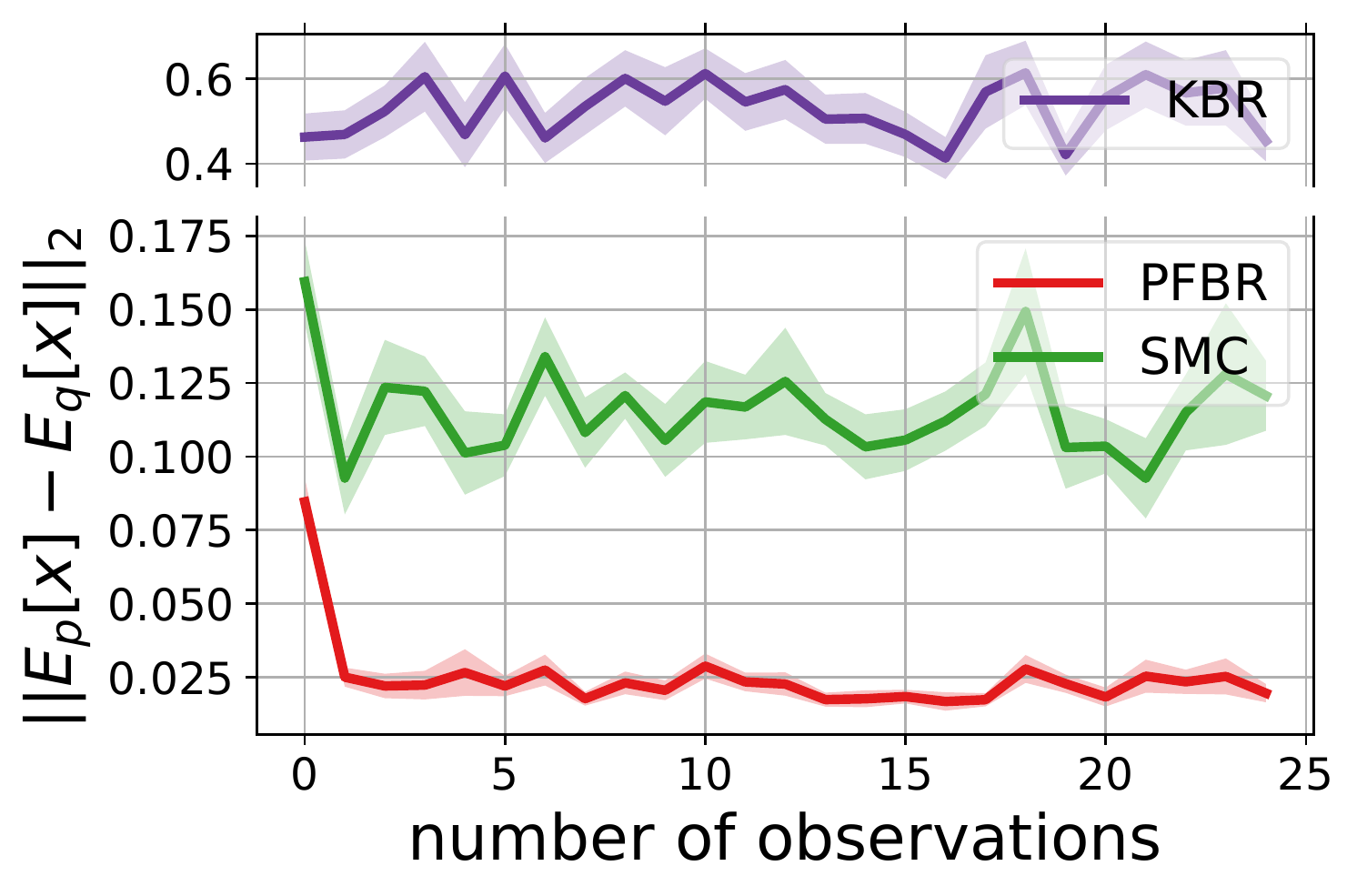}  &
     \includegraphics[width=0.23\textwidth]{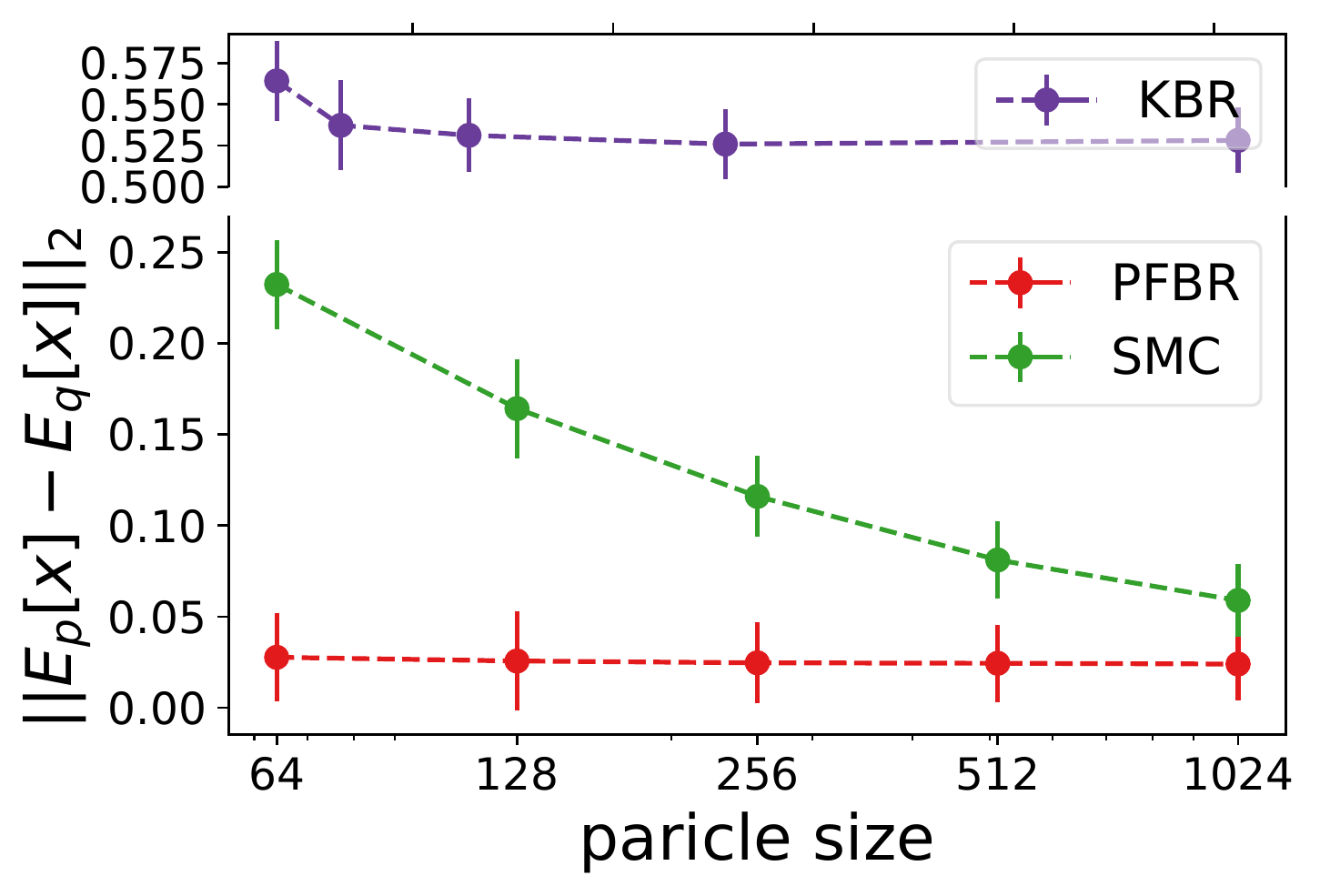} &
     \includegraphics[width=0.23\textwidth]{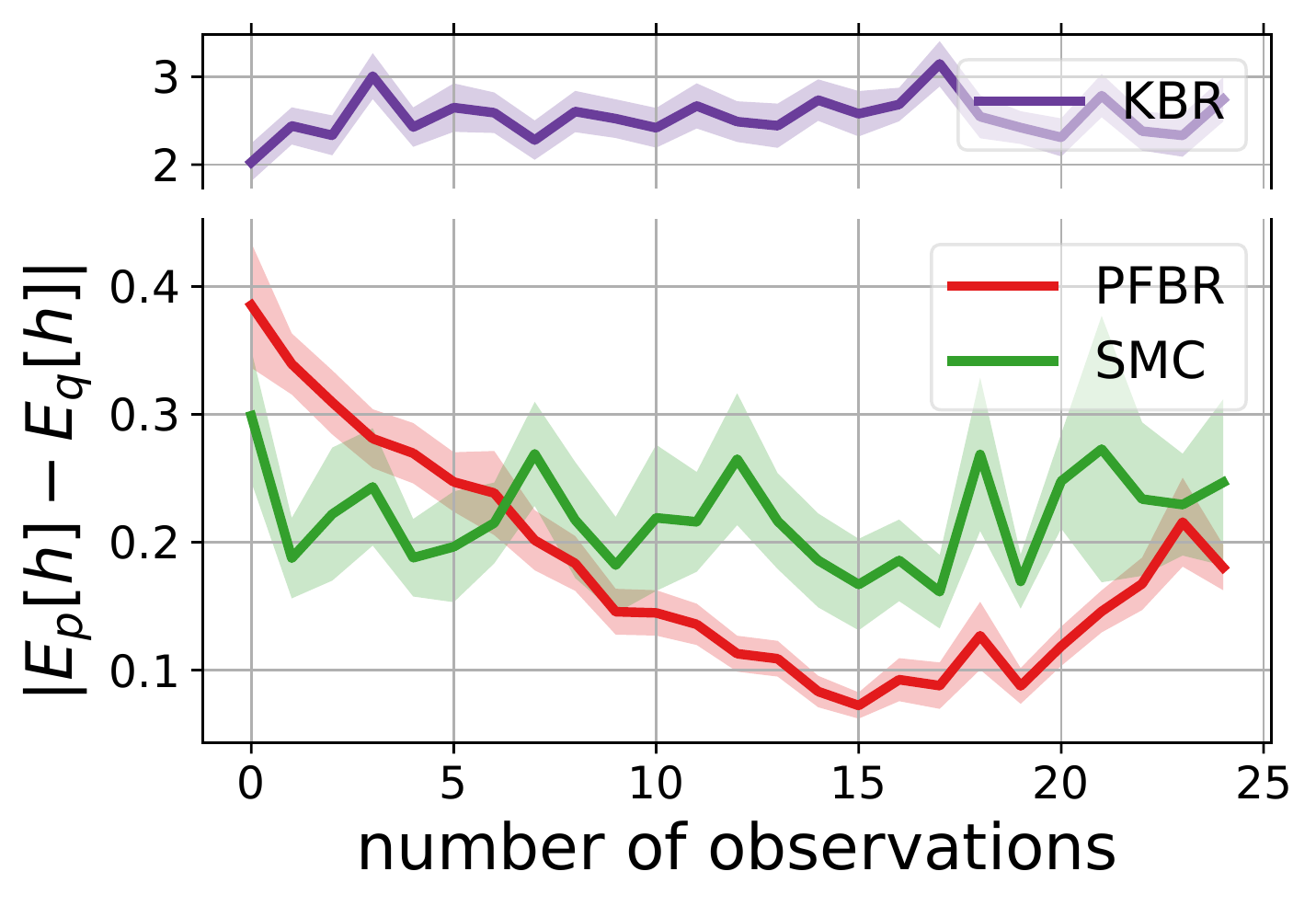} &
     \includegraphics[width=0.23\textwidth]{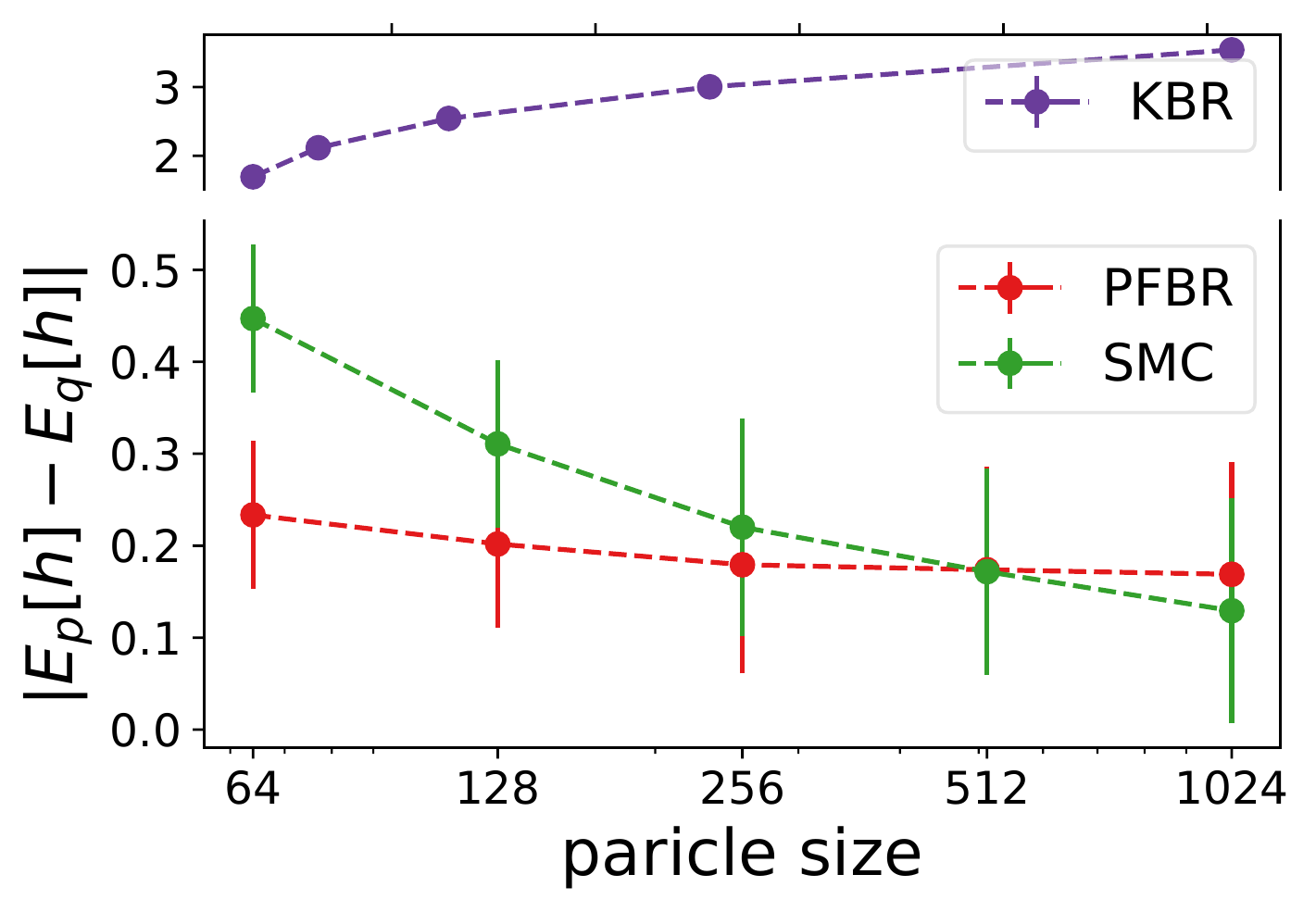} \\
     \multicolumn{2}{c}{(d) Integral estimation on $h(\vx)=\vx$} & \multicolumn{2}{c}{(e) Integral estimation on $h(\vx)=(A\vx+\va)^\top (B\vx+\vb)$}
   \end{tabular}
    \label{fig:apx_lds}
    \caption{Experimental results on LDS model.}
\end{figure}
\fi

\end{document}